\documentclass{article}

\usepackage[nonatbib,preprint]{neurips_2026}
\usepackage[numbers,sort&compress]{natbib}

\usepackage[utf8]{inputenc}
\usepackage[T1]{fontenc}
\usepackage{amsmath,amssymb,amsfonts}
\usepackage{mathtools}
\usepackage{algorithm}
\usepackage{algpseudocode}
\usepackage{graphicx}
\usepackage{subcaption}
\usepackage{booktabs}
\usepackage{hyperref}
\usepackage{cleveref}
\usepackage{xcolor}
\usepackage{enumitem}
\usepackage{amsfonts}       
\usepackage{nicefrac}       
\usepackage{microtype}
\usepackage{float}
\usepackage{microtype}      
\usepackage{xcolor}         
\usepackage{multirow}
\usepackage{url}

\usepackage[T1]{fontenc}
\usepackage[utf8]{inputenc}
\usepackage[warn]{textcomp}
\usepackage{amsthm}

\usepackage{siunitx}
\usepackage{enumitem}
\usepackage{xspace}
\usepackage{xfrac}
\usepackage{bm}
\usepackage{array}
\usepackage{adjustbox}
\usepackage{caption}
\usepackage{subcaption}
\usepackage{multirow}
\usepackage{graphicx}
\usepackage{amsmath}
\usepackage{amssymb}
\usepackage{booktabs}
\usepackage{mathtools}
\usepackage{algorithm}
\usepackage{algpseudocode}

\usepackage{amsmath, amsthm, amssymb}

\newtheorem{proposition}{Proposition}
\newtheorem{theorem}{Theorem}[section]

\newtheorem{lemma}[theorem]{Lemma}

\newcommand{\expect}{\mathbb{E}}
\newcommand{\real}{\mathbb{R}}
\newcommand{\norm}[1]{\lVert#1\rVert}
\newcommand{\vect}[1]{\bm{#1}}
\newcommand{\matt}[1]{\bm{\MakeUppercase{#1}}}

\newcommand{\fig}[1]{Fig.~\ref{fig:#1}}
\newcommand{\Figure}[1]{Figure~\ref{fig:#1}}

\newcommand{\Table}[1]{Table~\ref{tab:#1}}
\newcommand{\eqn}[1]{Eq.~\ref{eq:#1}}
\newcommand{\sect}[1]{Sec.~\ref{sec:#1}}
\newcommand{\Section}[1]{Section~\ref{sec:#1}}
\newcommand{\algo}[1]{Alg.~\ref{alg:#1}}
\newcommand{\Algo}[1]{Algorithm~\ref{alg:#1}}
\newcommand{\titlecaption}[2]{\caption[#1]{\textbf{#1.}~#2}}
\newcommand{\titlecaptionof}[3]{\captionof{#1}{\textbf{#2.}\xspace#3}}

\makeatletter
\DeclareRobustCommand\onedot{\futurelet\@let@token\@onedot}
\def\@onedot{\ifx\@let@token.\else.\null\fi\xspace}

\makeatother

\newcommand{\inlinesection}[1]{\vspace{0.05cm}\noindent\textbf{#1}}

\usepackage{amsmath,amssymb,bm}
\usepackage{tikz}
\usetikzlibrary{arrows.meta,positioning,calc,shapes.geometric,shapes.misc,
                decorations.pathreplacing,decorations.pathmorphing,
                fit,backgrounds,3d,patterns,matrix,shadows.blur}
\usepackage{xcolor}

\definecolor{octblue}{HTML}{1F6FEB}
\definecolor{octteal}{HTML}{138086}
\definecolor{octamber}{HTML}{D97706}
\definecolor{octrose}{HTML}{C2185B}
\definecolor{octplum}{HTML}{6B21A8}
\definecolor{octslate}{HTML}{334155}
\definecolor{octmuted}{HTML}{64748B}
\definecolor{octpaper}{HTML}{F8FAFC}
\definecolor{octsurface}{HTML}{E2E8F0}
\definecolor{octaccent}{HTML}{0EA5E9}
\definecolor{qjlgreen}{HTML}{059669}
\definecolor{codebook}{HTML}{7C3AED}

\newcommand{\capt}[1]{{\scriptsize\itshape\color{octmuted}#1}}

\newcommand{\ours}{OCTOPUS\xspace}

\makeatletter
\newif\ifshowchecklist
\newif\ifpreprint
\if@preprint
  \showchecklistfalse
  \preprinttrue
\else
  \showchecklisttrue
  \preprintfalse
\fi
\makeatother

\begin{document}

\title{\ours: Optimized KV Cache for Transformers via Octahedral Parametrization Under optimal Squared error quantization}
\author{Mark Boss \\
  Stability AI \and Vikram Voleti \\
  Stability AI \and Simon Donné \\
  Stability AI \and Shimon Vainer \\
  Stability AI}

\date{\today}

\maketitle

\begin{abstract}
The key-value (KV) cache dominates memory bandwidth and footprint in
long-context autoregressive inference. Recent rotation-preconditioned codecs (TurboQuant,
PolarQuant) show that a structured random rotation followed by a per-coordinate scalar quantizer matched to an analytically tractable marginal is a near-optimal recipe for KV compression.
\ours\ advances this paradigm through joint quantization of rotated coordinate \emph{triplets}. Each triplet's direction is mapped to a square via an octahedral parameterization, and the two resulting coordinates and the triplet norm are Lloyd-Max quantized against implementation-matched marginals.
Optimizing the per-triplet squared error gives a strictly non-uniform bit allocation depending only on the total dimensionality of the keys.
We find the finite-dimensional quality optimum with sweeps to be constant on every real decoder we test. The codec is data-oblivious, online, and deterministic given a seed.
Across text, video, and audio, \ours matches or beats every prior rotation codec at every reported bit width and metric, with a lead that grows as bits drop for extreme compression.
Furthermore, a fused Triton implementation reconstructs keys on the fly without materializing the uncompressed key, so the codec adds no decode-time bandwidth or latency over the existing dequantization. %
\ifpreprint
Project Page: \url{https://octopus-quant.github.io/}
\fi
\end{abstract}

\section{Introduction}
\label{sec:intro}

Long-context autoregressive inference, such as in large language models (LLM)~\cite{dubey2024llama3}, causal video generation models~\cite{yin2025causvid,zhu2026causal}, or audio generation models~\cite{qiu2024efficient}, is dominated by reading the
key-value (KV) cache from high-bandwidth memory at every decoding
step~\citep{fu2024data,liu2024lostinmiddle}. KV compression is therefore
the primary target for both latency and batch-size optimization, and prior works address it through token eviction~\citep{xiao2023streamingllm,zhang2024h2o,li2024snapkv},
per-channel scalar quantization with
residuals~\citep{hooper2024kvquant,liu2024kivi,kang2024gear}, and more
recently rotation-preconditioned quantization codecs
\citep{zandieh2025turboquant,han2025polarquant,zandieh2024qjl}.

Rotation-based codecs depend on a
structured random orthogonal $\matt{R}$ (typically a sign-flipped
Walsh-Hadamard transform due to efficiency~\citep{chee2023quip}) to make the marginal of every rotated key
coordinate isotropic and \emph{analytically} known. A 1-D Lloyd-Max
quantizer~\citep{lloyd1982least,max1960quantizing} matched to that
marginal is then near-optimal at matched bit width. In this way, TurboQuant~\citep{zandieh2025turboquant} gets a
symmetric Beta marginal, PolarQuant~\citep{han2025polarquant} does the analogous
construction on recursive polar angles, and the QJL 1-bit residual
makes the dot product unbiased at near-zero memory cost~\citep{zandieh2024qjl}. All three
quantize one coordinate (or one angle) at a time. \ours instead
quantizes coordinate-\emph{triplets} jointly.

Two observations motivate \ours. \emph{First}, the rotation pre-conditioning evenly spreads entropy across the coordinates: the norm of a small sub-block carries asymptotically less entropy with rising channel count.
We show that a codec that quantizes sub-block norm and direction separately, with non-uniform bit allocation between them, beats the per-coordinate quantizers at matched rate.
\emph{Second}, the octahedral map from computer graphics~\citep{engelhardt2008octahedron,cigolle2014survey} is an equal-area parameterization of $S^2$ that can encode a unit 3-vector
as two scalars on $[-1,1]^2$ in $\mathcal{O}(1)$ arithmetic operations,
with piecewise-linear encode/decode and a near-uniform Jacobian that
makes 1-D Lloyd-Max on the induced marginals a close approximation
to true 2-sphere distortion.
Therefore, \ours splits the pre-conditioned signal into triplets, and Lloyd-Max-quantizes the triplet norm and the octahedrally-mapped triplet direction coordinates with non-uniform bit depth.
There is no data-dependent calibration or per-vector scale: codebooks depend only on $d$ and the bit budget. Our contributions are:

\begin{itemize}[leftmargin=1.25em,itemsep=0.15em]
\item \textbf{Octahedral triplet direction quantizer} as a KV cache
primitive, with implementation-matched norm and direction marginals. The compress-decode pipeline is implemented as fused Triton
kernels~\citep{tillet2019triton,dao2022flashattention,shah2024flashattention3}
that reconstruct keys on the fly from packed bit indices and never needs to materialize the full key tensor.
\item \textbf{An MSE-optimal non-uniform bit split.} A Lagrangian on
the per-triplet squared error yields a finite-dimensional stationarity
condition that supports the implemented $(b{+}1,b{-}1)$ split at
$d{=}128$.
\item \textbf{Optional 1-bit QJL residual} (\ours-QJL) that drives the
seed-averaged dot-product bias to zero at the cost of one sign bit per
rotated coordinate.
\item \textbf{Generalization beyond LLMs.} Prior rotation-preconditioned
KV codecs are evaluated only on language models, but the construction is
agnostic to the source of the keys: any autoregressive transformer with
attention should benefit. We confirm this empirically. \ours\ is the
best rotation-based codec at matched bit widths $K{=}V\in\{4,3,2\}$ in
long-context language modeling
(Qwen2.5-7B-Instruct-1M~\citep{dubey2024llama3}), chunk-wise video
diffusion (CausVid~\citep{yin2025causvid}), frame-wise causal video
forcing~\citep{zhu2026causal}, and next-scale autoregressive
audio~\citep{qiu2024efficient}, with larger gaps at lower bit budgets.
\end{itemize}

\Section{related} situates \ours in the
literature; \Section{method} develops the codec; and
\Section{results} reports end-to-end numbers across the four
modalities.
Appropriate proofs are found in the Appendix.
\section{Related Work}
\label{sec:related}

\inlinesection{KV-cache compression.} Token
eviction~\citep{xiao2023streamingllm,zhang2024h2o,li2024snapkv,liu2024scissorhands,cai2024pyramidkv}
keeps only tokens that are likely to contribute to future attention.
Per-channel scalar quantization with per-token residuals attacks the
distribution of individual key
coordinates~\citep{hooper2024kvquant,liu2024kivi,kang2024gear,yang2024notoken,zhang2024kv1bit,dong2024qaq,yue2024wkvquant}.
Sparse coding~\citep{kim2024lexico} trades a bigger code table for
ultra-low rates. Rotation-preconditioned
codecs~\citep{zandieh2025turboquant,han2025polarquant,wu2025polarquant,zandieh2024qjl,ashkboos2024quarot,su2025rotatekv,han2025balancekv}
project keys by a data-oblivious random orthogonal operator so that the
marginals fed to the quantizer are analytically known; \ours belongs
to this last family.

\inlinesection{Rotation-preconditioned quantization.}
TurboQuant~\citep{zandieh2025turboquant} proves that a random orthogonal
rotation makes every coordinate of a unit vector marginally
symmetric-Beta on $[-1,1]$, so the MSE-optimal 1-D
Lloyd-Max~\citep{lloyd1982least,max1960quantizing} codebook depends
only on $(d,b)$ and lands within a small constant of the
Zador-Gersho~\citep{zador1964development,gersho1979asymptotically}
bound. The structured Walsh-Hadamard transform with random sign flips
is the standard fast
preconditioner~\citep{chee2023quip,ashkboos2024quarot,su2025rotatekv}.
PolarQuant~\citep{han2025polarquant} parameterises the rotated direction
recursively in polar coordinates instead. \ours reuses the
Walsh-Hadamard rotation but quantizes blocks of three rotated
coordinates jointly via an octahedral direction+norm split, which we
show gives strictly lower MSE at matched bit rate.

\inlinesection{Unit-direction encodings and unbiased estimators.}
Octahedral and related equal-area parameterizations of $S^2$ are the
de-facto compact direction encoding in real-time
rendering~\citep{engelhardt2008octahedron,cigolle2014survey}; to our knowledge \ours is the
first use of the octahedral map as a direction quantizer in transformer
decoding. Orthogonal to MSE-optimal codecs,
QJL~\citep{zandieh2024qjl} shows that a 1-bit
Johnson-Lindenstrauss sketch gives an unbiased inner-product estimator
at essentially zero memory; we compose it with \ours under the tag
\textsc{\ours-QJL}. We borrow only the rotation idea from the broader
quantization literature on
weights~\citep{frantar2022gptq,lin2024awq,chee2023quip} and
weight$+$activation
quantization~\citep{dettmers2022gpt3int8,xiao2023smoothquant,zhao2024atom,ashkboos2024quarot};
the codec, bit allocation and codebooks are specific to the KV cache
and online by construction. Fused attention
kernels~\citep{dao2022flashattention,shah2024flashattention3} keep our
reconstruction in registers.

\section{Method}
\label{sec:method}

\begin{figure}[t]
    \centering
    \resizebox{0.95\linewidth}{!}{\input{images/overview}}
    \titlecaption{The \ours encode pipeline}{%
      Stages~1--5 (top) realise the rotation and triplet decomposition of
      \sect{method:turboquant}--\ref{sec:method:triplets}: a key
      $\vect{k}$ is normalised (Eq.~\ref{eq:split}), preconditioned by a
      sign-flipped Walsh-Hadamard rotation (Eq.~\ref{eq:rotation}),
      cut into $n_\mathrm{tri}=\lceil d/3\rceil$ triplets, and decomposed
      into a triplet norm $\rho_i$ and a unit direction $\bm{n}_i\in S^2$
      (Sec.~\ref{sec:method:triplets}). Stage~6 (middle) maps each direction
      onto $[-1,1]^2$ via the octahedral fold
      (Eq.~\ref{eq:oct-encode}--\ref{eq:oct-decode}); the analytic
      triplet-norm marginal (Eq.~\ref{eq:rho-pdf}) and empirical oct-coordinate
      marginal (Eq.~\ref{eq:oct-marginal}) drive the Lagrangian bit allocation
      of \sect{method:bitsplit}, whose finite-dimensional stationarity
      condition (Eq.~\ref{eq:bitsplit-opt}) motivates the implemented
      $(b{+}1,b{-}1)$ split. Stage~7--8 (bottom) emit the compressed state
      $\mathcal{S}(\vect{k})=(\gamma,\mathcal{I}_\mathrm{dir},
      \mathcal{I}_\mathrm{nrm})$ via the Lloyd-Max codebooks of
      \sect{method:codebooks} followed by the local $3{\times}3$ joint
      rounding of \sect{method:rounding}. The optional QJL side-car (stage~9,
      \sect{method:score}) attaches a 1-bit sign sketch of the
      rotated-frame residual for an ideal-model unbiased dot-product estimator. The
      bottom strip summarises the fused decode kernel
      (\algo{decode}, App.~\ref{app:algos}).}%
    \label{fig:overview}
\end{figure}

\Figure{overview} previews the pipeline. Given a key
$\vect{k}\in\real^{d}$, the \ours encoder produces a compressed state
$\mathcal{S}(\vect{k})=(\gamma,\mathcal{I}_{\mathrm{dir}},
\mathcal{I}_{\mathrm{nrm}})$: the global norm, a packed stream of
octahedral-coordinate indices, and a packed stream of triplet-norm
indices. The decoder reconstructs a lossy $\hat{\vect{k}}$ inside
attention and never materialises $\hat{\matt{K}}$. We assume $d$ is a
power of two, as required by the Walsh-Hadamard transform.

\subsection{Rotation preconditioning}
\label{sec:method:turboquant}

We split each nonzero $\vect{k}$ into magnitude
$\gamma \in \mathbb{R}^{+}$ and direction
$\vect{\tilde u} \in S^{d-1}$:
\begin{equation}
\label{eq:split}
    \gamma \coloneqq \norm{\vect{k}}_2,\qquad
    \vect{\tilde u}\coloneqq \vect{k}/\gamma .
\end{equation}
The magnitude is stored as float32 (4\,B per key; $= 0.25$ bpc at
$d{=}128$), so almost the entire quantization budget goes to the unit
direction. We precondition $\vect{\tilde u}$ by a sign-flipped
Walsh-Hadamard transform: with
$\vect{s}\in\{\pm 1\}^{d}$ drawn once per attention head and $\matt{H}$
the normalised Hadamard matrix,
\begin{equation}
\label{eq:rotation}
    \matt{R} \coloneqq \matt{H}\,\mathrm{diag}(\vect{s}),\qquad
    \vect{u}\coloneqq \matt{R}\,\vect{\tilde u}\in S^{d-1}.
\end{equation}
$\matt{R}$ is orthogonal and its inverse runs in
$\mathcal{O}(d\log d)$ via an in-place butterfly. Inner products are
preserved
($\vect{q}^{\!\top}\vect{k}=\gamma\,(\matt{R}\vect{q})^{\!\top}\vect{u}$),
and each coordinate of high-dimensional $\vect{u}$ has the marginal
\begin{equation}
\label{eq:beta-marginal}
    f(u) \sim (1-u^2)^{(d-3)/2}  \qquad u\in[-1,1].
\end{equation}

\subsection{Triplet decomposition and octahedral coordinates}
\label{sec:method:triplets}

TurboQuant's MSE baseline quantizes $\vect{u}$ with a per-coordinate
Lloyd-Max~\citep[Thm.~1]{zandieh2025turboquant}. \ours instead
quantizes \emph{triplets} of rotated coordinates jointly.
We partition $\vect{u}$ into $n_{\mathrm{tri}}=\lceil d/3\rceil$
contiguous triplets $\vect{t}_i\in\real^{3}$, zero-padding the last. For each
triplet $\vect{t}_i$ we again split its norm $\rho_i \in \mathbb{R}^{+}$ from its direction $\vect{n}_i  \in S^{2}$.
When $\rho_i=0$, the implementation uses an $\epsilon$-safe divisor and
stores a placeholder direction.
\begin{lemma}[\textbf{Triplet-norm marginal}]
\label{lem:triplet-norm}
For $\vect{u}$ uniform on $S^{d-1}$, $\rho_i^{2}\sim
\mathrm{Beta}(3/2,(d-3)/2)$, so $\rho_i$ has density
\begin{equation}
\label{eq:rho-pdf}
    f_{\rho}(r)
    = \frac{2 r^{2}\,(1-r^{2})^{(d-5)/2}}{B(3/2,(d-3)/2)},
    \qquad r\in[0,1].
\end{equation}
\end{lemma}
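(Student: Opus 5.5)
The natural route is the Gaussian representation of the uniform distribution on $S^{d-1}$. Let $\vect{g}=(g_1,\dots,g_d)$ have i.i.d.\ $\mathcal{N}(0,1)$ entries. Then $\vect{u}\overset{d}{=}\vect{g}/\norm{\vect{g}}_2$. For a full (non-padded) triplet $i$ occupying coordinates $S_i$ with $|S_i|=3$, we can write
\[
\rho_i^2=\frac{X}{X+Y},\qquad X\coloneqq\sum_{j\in S_i} g_j^2,\quad Y\coloneqq\sum_{j\notin S_i} g_j^2 .
\]
Here $X\sim\chi^2_3$ and $Y\sim\chi^2_{d-3}$ are independent, because they are built from disjoint sets of independent Gaussians. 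Equivalently $X\sim\mathrm{Gamma}(3/2,2)$ and $Y\sim\mathrm{Gamma}((d-3)/2,2)$ with a common scale.

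The key step is then the standard Gamma--Beta identity. If $X\sim\mathrm{Gamma}(a,\theta)$ and $Y\sim\mathrm{Gamma}(b,\theta)$ are independent, then $X/(X+Y)\sim\mathrm{Beta}(a,b)$. With $a=3/2$ and $b=(d-3)/2$ this gives $\rho_i^2\sim\mathrm{Beta}(3/2,(d-3)/2)$. If a self-contained argument is wanted, I would prove the identity by the change of variables $(x,y)\mapsto(s,t)=(x+y,\,x/(x+y))$, which has Jacobian $s$, and then integrate out $s$. The joint density factorises as a Gamma$(a+b)$ density in $s$ times $t^{a-1}(1-t)^{b-1}/B(a,b)$ in $t$.

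Finally, I would transform from $\rho^2$ to $\rho=\sqrt{\rho^2}$. With $t=r^2$ and $dt=2r\,dr$,
\[
f_\rho(r)=2r\cdot\frac{(r^2)^{1/2}(1-r^2)^{(d-5)/2}}{B(3/2,(d-3)/2)}=\frac{2r^2(1-r^2)^{(d-5)/2}}{B(3/2,(d-3)/2)},\qquad r\in[0,1],
\]
which matches \eqref{eq:rho-pdf}.

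The main obstacle is bookkeeping rather than analysis:
\begin{itemize}[leftmargin=1.25em,itemsep=0.15em]
\item \textbf{Padded last triplet.} When $3\nmid d$ (always true here, since $d$ is a power of two), the last triplet contains only one or two genuine coordinates. Its norm instead follows $\mathrm{Beta}(m/2,(d-m)/2)$ with $m\in\{1,2\}$. I would state explicitly that the lemma applies to the full triplets $i<n_{\mathrm{tri}}$, and note the modified law for the padded one.
\item \textbf{Which vector is uniform.} The lemma assumes $\vect{u}$ is uniform on $S^{d-1}$. This is an idealisation of the Hadamard-rotated direction in \eqref{eq:rotation}, so no property of $\matt{R}$ beyond that hypothesis is needed.
\item \textbf{Validity of the density.} We need $d\ge 4$ so that $(d-3)/2>0$ and the Beta distribution is proper; this holds for every power of two $d\ge 4$.
\end{itemize}
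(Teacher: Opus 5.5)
Your proposal is correct and follows the paper's proof essentially step for step: the Gaussian representation $\vect{u}=\vect{g}/\norm{\vect{g}}_2$, the independent $\chi^2_3$ and $\chi^2_{d-3}$ block sums, the Gamma--Beta ratio identity, and the change of variables $t=r^2$. Your remarks on the padded last triplet and on needing $d\ge 4$ are accurate refinements, not departures; the paper handles the padding point only implicitly by restricting to full three-coordinate blocks.
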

\noindent
As $d\to\infty$ the scale $\sqrt{\expect[\rho_i^2]}=\sqrt{3/d}$
vanishes, so radial errors contribute less absolute squared error than
direction errors.

\inlinesection{Octahedral parameterization.}
We encode $\vect{n}_i\in S^{2}$ as two scalars on $[-1,1]^{2}$ via the
octahedral map~\citep{engelhardt2008octahedron,cigolle2014survey}.
With $(x,y,z)=\vect{n}_i$ and $\ell=|x|+|y|+|z|$, project to the
octahedron $\{\ell=1\}$ via $(p_x,p_y,p_z)=\vect{n}_i/\ell$, then unfold
to a square in $[-1,1]^{2}$:
\begin{equation}
\label{eq:oct-encode}
    \mathrm{Oct}(\vect{n}_i) = (\xi, \eta) \coloneqq
    \begin{cases}
        (p_x,\,p_y) & \text{if } p_z\geq 0,\\[1pt]
        \bigl(\mathrm{sign}(p_x)(1-|p_y|),\,
              \mathrm{sign}(p_y)(1-|p_x|)\bigr) & \text{if } p_z<0.
    \end{cases}
\end{equation}
The decoder inverts this: given $(\xi,\eta)\in[-1,1]^{2}$,
\begin{equation}
\label{eq:oct-decode}
    \vect{n}(\xi,\eta)
    =\frac{(\xi',\eta',\,1-|\xi|-|\eta|)}
          {\norm{(\xi',\eta',\,1-|\xi|-|\eta|)}_2},
\end{equation}
with $(\xi',\eta')=(\xi,\eta)$ if $1-|\xi|-|\eta|\geq 0$ and
$(\xi',\eta')=(\mathrm{sign}(\xi)(1-|\eta|),\mathrm{sign}(\eta)(1-|\xi|))$
otherwise. The map is a piecewise linear bijection $S^{2}\!\to[-1,1]^2$
with a constant Jacobian per octant~\citep{engelhardt2008octahedron,cigolle2014survey}.
The octahedral
fold maps to a \emph{square} code space, so per-coordinate Lloyd-Max
on $(\xi,\eta)$ closely approximates the true 2-sphere distortion, while recursive polar parameterizations~\citep{han2025polarquant} need transcendental operators and induce
$\sin^{2^{\ell-1}\!-1}(2\psi)$ angle marginals.

Under the uniform prior on $S^2$, the octahedral-coordinate marginal is
non-uniform. Writing $a=|\xi|$, the marginal induced by the implemented fold is
\begin{equation}
\label{eq:oct-marginal}
    f_{\xi}(\xi)=
    \frac{1}{\pi\sqrt{a^2+(1-a)^2}}
    \left(
      \frac{1-a}{1-2a+3a^2}
      + \frac{a}{2-4a+3a^2}
    \right),
    \qquad a=|\xi|,\;\xi\in[-1,1],
\end{equation}
and $\eta$ shares this marginal by symmetry. Rather than directly evaluate
\eqn{oct-marginal}, the implementation trains a Lloyd-Max 1-D 
codebook on empirical samples of $\mathrm{Oct}(\vect{n})$,
$\vect{n}\sim\mathrm{Unif}(S^{2})$, and shares it between $\xi$ and $\eta$.

\subsection{MSE-optimal bit allocation}
\label{sec:method:bitsplit}

\ours quantizes each triplet in a total budget of
$B_{\mathrm{tri}}=2b_{\mathrm{dir}}+b_{\mathrm{nrm}}$ bits, where
$b_{\mathrm{dir}}$ bits go to each octahedral coordinate and
$b_{\mathrm{nrm}}$ bits go to the triplet norm. We parameterise the allocations
around an integer $b$, with a uniform reference
$b_{\mathrm{dir}}{=}b_{\mathrm{nrm}}{=}b$ giving $B_{\mathrm{tri}}{=}3b$.
This uniform split is sub-optimal in the \emph{squared-error} sense for
any reasonable $d$.

\inlinesection{MSE budget per triplet.}
Writing the encoder output as
$\hat{\vect{t}}_i = \hat{\rho}_i\,\vect{n}(\hat{\xi}_i,\hat{\eta}_i)$
and adding/subtracting $\rho_i\hat{\vect{n}}_i$ gives the bound
\begin{align}
    \norm{\vect{t}_i - \hat{\vect{t}}_i}_2^{2}
    &\leq 2(\rho_i - \hat{\rho}_i)^{2}
       + 2\rho_i^{2}\,\norm{\vect{n}_i - \hat{\vect{n}}_i}_2^{2},
    \label{eq:triplet-mse}
\end{align}
tight up to a $1{+}o(1)$ factor at any reasonable bit width.
Under the rotated-sphere prior $\rho_i$ and $\vect{n}_i$ are independent,
so expectations factor. By Panter-Dite high-rate
distortion~\citep{panter1951quantization,gersho1982structure}, a 1-D
Lloyd-Max quantizer with $b$ bits and source variance $\sigma^2$
incurs $D\approx C\sigma^2 4^{-b}$. The first term therefore contributes
$2C_{\rho}\,\sigma_{\rho}^{2}\,4^{-b_{\mathrm{nrm}}}$ with $\sigma_{\rho}^{2}$
to the variance of \eqn{rho-pdf}. The two scalar codebooks on $(\xi,\eta)$
pull the squared error back to $S^{2}$ through the constant-per-octant
Jacobian; absorbing that Jacobian and the factor of two into an effective
directional variance $\sigma_{\vect{n}}^{2}$, the second term contributes
$2\expect[\rho_i^{2}]\,C_n\,\sigma_{\vect{n}}^{2}\,4^{-b_{\mathrm{dir}}}
 = (6/d)\,C_n\,\sigma_{\vect{n}}^{2}\,4^{-b_{\mathrm{dir}}}$.
By \eqn{rho-pdf}, $\sigma_{\rho}^{2}=\mathcal{O}(d^{-1})$ while
$\sigma_{\vect{n}}^{2}=\mathcal{O}(1)$, so direction errors remain
order-one on $S^2$ even after the weighting of $\rho_i^2$.

\inlinesection{Lagrangian optimum.}
Minimizing
\begin{equation}
\label{eq:mse-budget}
    \expect\!\left[\norm{\vect{t}_i - \hat{\vect{t}}_i}_2^{2}\right]
    \propto
    2C_{\rho}\,\sigma_{\rho}^{2}\,4^{-b_{\mathrm{nrm}}}
    \;+\;(6/d)\,C_{n}\,\sigma_{\vect{n}}^{2}\,4^{-b_{\mathrm{dir}}}
\end{equation}
subject to $2b_{\mathrm{dir}}+b_{\mathrm{nrm}}=B_{\mathrm{tri}}$ gives
\begin{equation}
\label{eq:bitsplit-opt}
    b_{\mathrm{dir}}^{\star} - b_{\mathrm{nrm}}^{\star}
    = \log_{4}\!\left(
        \frac{3\,C_n\,\sigma_{\vect{n}}^{2}}
             {2d\,C_\rho\,\sigma_{\rho}^{2}}\right).
\end{equation}

Substituting the known $\sigma_\rho^2 = O(d^{-1})$ and $\sigma_{\vect{n}}^2 = O(1)$, the asymptotic bit gap is independent of key dimensionality $d$ and also notably independent of total bit budget $B_{\mathrm{tri}}$: 
$b_{\mathrm{dir}}^{\star} - b_{\mathrm{nrm}}^{\star} = \mathcal{O}(1)$.

\inlinesection{Empirical verification.}
On synthetic Gaussian keys at $d{=}128$ we sweep the diagonal
$(b{+}\delta,b{-}\delta)$, $\delta\in\{-2,\ldots,+2\}$, around each
uniform reference $b\in\{2,3,4\}$. The MSE landscape is sharply convex
in $\delta$ with minimum at $\delta{=}+1$, i.e.\ at $(b{+}1,b{-}1)$, for
every $b$ tested; relative to uniform $(b,b)$ the implemented split
\emph{reduces} MSE by $31$--$41\%$, while every other diagonal step
\emph{raises} it (by $+44$ to $+73\%$ at $\delta{=}+2$, and by an
order of magnitude or more at $\delta{=}-2$). The complete sweep is in
App.~\ref{app:bit-split-sweep}; \Section{results} shows that the same
$(b{+}1,b{-}1)$ split minimizes downstream error across every modality
we test.

\subsection{Codebooks}
\label{sec:method:codebooks}

Two Lloyd-Max codebooks suffice: $\mathcal{C}_{\rho}(d,b_{\mathrm{nrm}})$
on $[0,1]$ matched to \eqn{rho-pdf}, and $\mathcal{C}_{\xi}(b_{\mathrm{dir}})$
on $[-1,1]$ matched to the empirical $\xi$ marginal. Both are trained
off-line via the standard alternating
assignment/update Lloyd-Max iteration to distortion $10^{-10}$, are
serialized to disk, and are tiny ($\leq 32{+}8$ fp32 centroids per
$(d,b)$, $\approx 160$\,B). They depend only on $(d,b_{\mathrm{dir}},
b_{\mathrm{nrm}})$, without data-dependent calibration.

\subsection{Joint rounding of $(\xi_i,\eta_i,\rho_i)$}
\label{sec:method:rounding}

Given the bit split and the codebooks $\mathcal{C}_{\xi},
\mathcal{C}_{\rho}$ of \sect{method:codebooks}, the encoder still
chooses which code tuple $(\hat{\xi}_i,\hat{\eta}_i,\hat{\rho}_i)$ to
emit. Three independent nearest-centroid rounds under
\eqn{oct-marginal} and \eqn{rho-pdf} are \emph{marginal}-optimal but
not \emph{joint}-optimal: the decoder of \eqn{oct-decode} is nonlinear
in $(\xi,\eta)$ and multiplicative in $\rho$, so the
product-of-scalar-rounds does not in general minimize
\begin{equation}
\label{eq:triplet-joint-objective}
    \ell(\hat{\xi}_i,\hat{\eta}_i,\hat{\rho}_i)
    \coloneqq
    \norm{\vect{t}_i-\hat{\rho}_i\,\vect{n}(\hat{\xi}_i,\hat{\eta}_i)}_2^{2}.
\end{equation}
This is the octahedral analog of the ``optimal rounding'' pass for
tangent-frame codecs in graphics~\citep{kapoulkine2026tangent},
extended to include $\rho$ in the joint.

\inlinesection{Simplification.}
Expanding \eqn{triplet-joint-objective} with $\norm{\vect{n}(\cdot)}_2=1$ gives
\begin{equation}
\label{eq:triplet-joint-expand}
    \ell = \rho_i^{2}-2\hat{\rho}_i\,s_i(\hat{\xi}_i,\hat{\eta}_i)+\hat{\rho}_i^{2},
    \qquad
    s_i(\hat{\xi}_i,\hat{\eta}_i)
    \coloneqq \vect{t}_i^{\!\top}\vect{n}(\hat{\xi}_i,\hat{\eta}_i).
\end{equation}
For any fixed direction candidate, the optimal $\hat{\rho}_i$ is the
$\mathcal{C}_{\rho}$ centroid nearest to $s_i$ (not to $\norm{\vect{t}_i}_2$),
and the joint minimum reduces to maximizing $s_i$ on the direction
candidates: $(\hat{\xi}_i,\hat{\eta}_i) = \arg\max s_i$, then
$\hat{\rho}_i = \arg\min_{c\in\mathcal{C}_{\rho}}|c-\mathrm{clip}_{[0,1]}(s_i^{*})|$.
Direction selection therefore decouples from $\rho$ selection.

\inlinesection{Local 3$\times$3 candidate set.}
The full direction argmax runs over $2^{2b_{\mathrm{dir}}}$ candidates.
In practice, the Lloyd scalar seed $(i_\xi,i_\eta)$ is at most one index
away from the joint optimum at every bit width we measured, so \ours
enumerates only the nine candidates
$\{(i_\xi+\delta_x,\,i_\eta+\delta_y)\mid \delta_x,\delta_y\in\{-1,0,1\}\}$,
clamped to the codebook range. Across $10^{4}$ random rotated triplets
in $d{=}128$ and $b_{\mathrm{dir}}\in\{2,\ldots,5\}$, this search
was \emph{byte-identical} to the full grid search in all buckets at a
fraction of the cost (App.~\ref{app:rounding-ablation}).

\inlinesection{Format invariance.}
Only the \emph{encoder} changes; the bitstream layout, codebooks,
and decoder of \eqn{oct-decode} are untouched, so joint rounding
 does not require a decoder change. Every deployed \ours state (with or
without QJL) is decoded by the same fused attention kernel of
\sect{method:score}.
\Algo{encode} in App.~\ref{app:algos} writes out both variants; we run
\texttt{local\_3x3} as the default throughout \Section{results}.

\subsection{Score path and the optional 1-bit QJL residual}
\label{sec:method:score}

At decode time, the rotated-frame inner product factorizes over triplets:
\begin{equation}
\label{eq:score-exact}
    \vect{q}^{\!\top}\hat{\vect{k}}
    = \gamma\,\vect{q}_{\mathrm{rot}}^{\!\top}\hat{\vect{u}}
    = \gamma\!\sum_{i=0}^{n_{\mathrm{tri}}-1}
      \hat{\rho}_i\,\vect{q}_{\mathrm{rot},i}^{\!\top}\hat{\vect{n}}_i,
\end{equation}
where $\vect{q}_{\mathrm{rot}}=\matt{R}\vect{q}$ and
$\hat{\vect{n}}_i=\mathrm{Oct}^{-1}(\mathcal{C}_{\xi}[I^{\mathrm{dir}}_{i,0}],
\mathcal{C}_{\xi}[I^{\mathrm{dir}}_{i,1}])$, $\hat{\rho}_i=\mathcal{C}_{\rho}[I^{\mathrm{nrm}}_i]$.
Only $2n_{\mathrm{tri}}$ direction- and $n_{\mathrm{tri}}$
norm-centroid loads are required; $\hat{\matt{K}}$ never materialized. The encoder and a fused split-K flash decoder are in
App.~\ref{app:algos}.

\inlinesection{1-bit QJL residual (\ours-QJL).}
MSE-optimal scalar quantizers are biased in the dot
product~\citep{zandieh2025turboquant}. We optionally attach a
QJL~\citep{zandieh2024qjl} sketch of the rotated-frame residual
$\vect{r}\coloneqq\vect{u}-\hat{\vect{u}}$. With
$\matt{R}'=\matt{H}\,\mathrm{diag}(\vect{s}')$ a second rotation with
independent seed, we store $\vect{\sigma}=
\mathrm{sign}(\matt{R}'\vect{r})\in\{\pm1\}^{d}$ and a residual norm
$\gamma_{r}=\norm{\vect{r}}_2$ (fp16). The QJL estimator
$\widehat{\vect{q}_{\mathrm{rot}}^{\!\top}\vect{r}}=
\sqrt{\pi/(2d)}\,\gamma_{r}\,(\matt{R}'\vect{q}_{\mathrm{rot}})^{\!\top}\vect{\sigma}$
is unbiased under the ideal QJL model, with variance $\mathcal{O}(1/d)$; the
implementation uses the same scaling with a structured WHT rotation and an
fp16-rounded $\gamma_r$. The corrected score is
$\vect{q}^{\!\top}\hat{\vect{k}}+\gamma\,
\widehat{\vect{q}_{\mathrm{rot}}^{\!\top}\vect{r}}$.

\section{Experiments}
\label{sec:results}

We compare \ours and \ours-QJL against three rotation-preconditioned
codecs sharing the same Walsh-Hadamard rotation, $V$ codec, and
residual window:
\textbf{TurboQuant-MSE}~\citep{zandieh2025turboquant} (per-coordinate
Lloyd-Max),
\textbf{TurboQuant-QJL}~\citep{zandieh2024qjl,zandieh2025turboquant}
(MSE stage $+$ 1-bit JL residual), and
\textbf{PolarQuant}~\citep{han2025polarquant} (recursive polar). The
only variable across rows is the $K$ codec, and every comparison is
matched at the same symmetric $K{=}V$ bit width.

\inlinesection{Modalities.} (i)~A \emph{synthetic probe} of isotropic
Gaussian keys at $d{=}128$, the regime in which the rotation-Beta-Lloyd
baseline is provably near-optimal. (ii)~Long-context language modelling
with \texttt{Qwen2.5-7B-Instruct-1M}~\citep{dubey2024llama3}: 7B
parameters, GQA~\citep{ainslie2023gqa}, 28 layers, $d_h{=}128$, 1M
native context. (iii) Two Wan-1.3B autoregressive video DiTs at
$d_h{=}64$ and 30 blocks: chunk-wise CausVid~\citep{yin2025causvid}
(3-frame chunks) and frame-wise Causal Forcing~\citep{zhu2026causal}.
(iv)~The 16-block next-scale autoregressive audio model
AAR~\citep{qiu2024efficient}. Default recipe: short residual window of
native-precision tokens/frames/scales and value-side group size
$\in\{16,32\}$. The video and audio cross-codec rows use the
unprotected default; the LLM cross-codec rows use the boundary-1 $K$
recipe described in \sect{results:llm} as a setup prerequisite.
Compression ratios are
$(\text{fp16 KV bytes})/(\text{compressed KV bytes})$.

\subsection{Synthetic rate-quality and needle retrieval}
\label{sec:results:synthetic}

We draw $n{=}1024$ Gaussian keys and $16$ Gaussian queries at $d{=}128$
and average over $64$ seeds, reporting reconstruction cosine, per-coord
MSE, and inner-product (IP) absolute error
$|\vect{q}^{\!\top}\vect{k}-\mathrm{score}(\vect{q},\vect{k})|$ with each codec's
paper-claimed estimator (cf.\ TurboQuant Fig.~1--2~\citep{zandieh2025turboquant}).
For needle-in-a-haystack we plant one key in $T{=}2048$ Gaussian
distractors with $10\%$ noisy query and report softmax mass on the
needle, averaged over $128$ seeds (the fp32 baseline concentrates
$0.960$).

\begin{figure}[t]
    \centering
    \begin{subfigure}[b]{0.46\linewidth}
        \includegraphics[width=\linewidth]{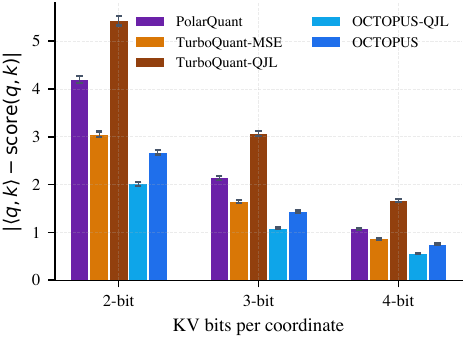}
        \caption{IP absolute error ($d{=}128$)}
    \end{subfigure}\hfill
    \begin{subfigure}[b]{0.46\linewidth}
        \includegraphics[width=\linewidth]{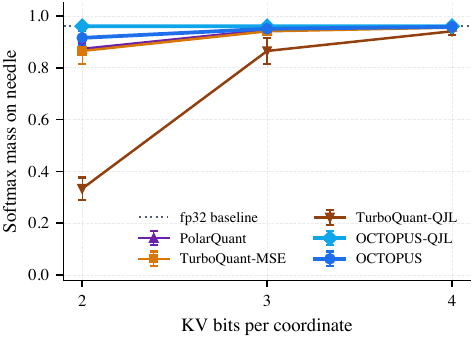}
        \caption{Synthetic needle retrieval}
    \end{subfigure}
    \titlecaption{Synthetic fidelity}{%
      (a)~\ours-QJL is best at every bit width; \ours alone beats
      every non-QJL baseline. (b)~\ours-QJL tracks fp32 to within
      $0.001$; TurboQuant-QJL drops to near-uniform at $b{=}2$.}%
    \label{fig:synth_ip}
\end{figure}

\Table{synthetic} and \fig{synth_ip}: \ours has the best
reconstruction fidelity of any rotation codec at every bit width,
with MSE $1.3\times$ below the per-coordinate-optimal TurboQuant-MSE
at $b{=}4$ and $2.4\times$ below PolarQuant at $b{=}2$.
\ours-QJL drives the IP error $3\times$ below TurboQuant-QJL at a matched
rate (the latter spends one bit on its stage-1 quantizer, leaving
the reconstruction one bit worse). On the synthetic needle, \ours-QJL
tracks the fp32 baseline to within $0.001$; at $b{=}2$, \ours
preserves $0.92$ of the softmax mass vs.\ $0.86$/$0.87$/$0.33$.

\begin{figure}[t]
\centering
\titlecaptionof{table}{Synthetic reconstruction fidelity at $d=128$}{Isotropic Gaussian keys/queries, averaged over $64$ seeds. Reconstruction MSE per coord. Best per bit-width block is \textbf{bold}, runner-up \underline{underlined}.}
\label{tab:synthetic}
\footnotesize
\setlength{\tabcolsep}{5pt}
\begin{tabular}{llrrr}
bits & codec & cos ($\uparrow$) & MSE ($\downarrow$) & IP abs err ($\downarrow$) \\
\midrule
2 & TurboQuant-MSE & \underline{0.9406} & \underline{0.1161} & 3.054 \\
2 & TurboQuant-QJL & 0.7994 & 0.3610 & 5.427 \\
2 & PolarQuant & 0.8902 & 0.2197 & 4.200 \\
2 & \textbf{\ours} & \textbf{0.9547} & \textbf{0.0897} & \underline{2.682} \\
2 & \textbf{\ours-QJL} & \textbf{0.9547} & \textbf{0.0897} & \textbf{2.015} \\
\midrule
3 & TurboQuant-MSE & \underline{0.9831} & \underline{0.0340} & 1.650 \\
3 & TurboQuant-QJL & 0.9406 & 0.1161 & 3.072 \\
3 & PolarQuant & 0.9715 & 0.0571 & 2.142 \\
3 & \textbf{\ours} & \textbf{0.9871} & \textbf{0.0260} & \underline{1.444} \\
3 & \textbf{\ours-QJL} & \textbf{0.9871} & \textbf{0.0260} & \textbf{1.084} \\
\midrule
4 & TurboQuant-MSE & \underline{0.9954} & \underline{0.0094} & 0.866 \\
4 & TurboQuant-QJL & 0.9831 & 0.0340 & 1.660 \\
4 & PolarQuant & 0.9928 & 0.0145 & 1.079 \\
4 & \textbf{\ours} & \textbf{0.9965} & \textbf{0.0071} & \underline{0.753} \\
4 & \textbf{\ours-QJL} & \textbf{0.9965} & \textbf{0.0071} & \textbf{0.565} \\
\end{tabular}

\end{figure}

\subsection{Long-context language modelling (Qwen2.5-7B-Instruct-1M)}
\label{sec:results:llm}

\begin{figure}[t]
\centering
\titlecaptionof{table}{Long-context LM on \texttt{Qwen2.5-7B-Instruct-1M}}{WikiText-2 / C4 perplexity at context $4096$, symmetric $K{=}V$. Every row uses the same recipe (residual window $32$, $V$ group size $32$, K-side protection on the outer transformer block at each end---a stability prerequisite for this model, not a contribution). Deltas are vs.\ fp16. KV$\times$ $=$ fp16 bytes\,/\,compressed bytes. Best per bit-width block is \textbf{bold}, runner-up \underline{underlined}.}
\label{tab:llm}
\scriptsize
\setlength{\tabcolsep}{4pt}
\begin{tabular}{llrrrrr}
bits & codec & WikiText-2 & $\Delta$\% & C4 & $\Delta$\% & KV$\times$ \\
\midrule
-- & fp16 baseline & 10.033 & 0.00 & 12.701 & 0.00 & $1.0\times$ \\
\midrule
4 & TurboQuant-MSE & \underline{10.340} & \underline{+3.1} & 12.921 & \underline{+1.7} & $2.2\times$ \\
4 & TurboQuant-QJL & 10.836 & +8.0 & 13.699 & +7.9 & $2.2\times$ \\
4 & PolarQuant & 10.473 & +4.4 & 13.091 & +3.1 & $2.2\times$ \\
4 & \textbf{\ours} & \textbf{10.306} & \textbf{+2.7} & \underline{12.896} & \textbf{+1.5} & $2.2\times$ \\
4 & \textbf{\ours-QJL} & \textbf{10.306} & \textbf{+2.7} & \textbf{12.893} & \textbf{+1.5} & $2.0\times$ \\
\midrule
3 & TurboQuant-MSE & 10.899 & \underline{+8.6} & 13.761 & +8.3 & $2.6\times$ \\
3 & TurboQuant-QJL & 15.093 & +50.4 & 20.308 & +59.9 & $2.5\times$ \\
3 & PolarQuant & 11.612 & +15.7 & 14.716 & +15.9 & $2.6\times$ \\
3 & \textbf{\ours} & \textbf{10.753} & \textbf{+7.2} & \textbf{13.446} & \textbf{+5.9} & $2.5\times$ \\
3 & \textbf{\ours-QJL} & \underline{10.754} & \textbf{+7.2} & \underline{13.474} & \underline{+6.1} & $2.3\times$ \\
\midrule
2 & TurboQuant-MSE & 16.354 & \underline{+63.0} & 22.536 & +77.4 & $3.0\times$ \\
2 & TurboQuant-QJL & 87.490 & +772.0 & 184.034 & +1349.0 & $3.0\times$ \\
2 & PolarQuant & 28.759 & +186.6 & 61.486 & +384.1 & $3.0\times$ \\
2 & \textbf{\ours} & \underline{13.517} & \textbf{+34.7} & \underline{17.976} & \underline{+41.5} & $2.9\times$ \\
2 & \textbf{\ours-QJL} & \textbf{13.511} & \textbf{+34.7} & \textbf{17.955} & \textbf{+41.4} & $2.6\times$ \\
\end{tabular}

\end{figure}

Following~\citet[\S5]{zandieh2025turboquant} we report WikiText-2 and
C4 perplexity (PPL) ($512$-token blocks, $8$ chunks) and a multi-key
needle-in-a-haystack sweep~\citep{kamradt2023needle,hsieh2024ruler}
($4$k--$128$k context; $4{+}1$ needles with random $8$-char magic
values, exact-match scoring). Recipe: residual window $32$, $V$ group
size $32$, $K$ held at fp16 on both boundary blocks
(``boundary-1'')---a stability prerequisite, not a contribution: every
rotated codec diverges to PPL~$>$$10^3$ without it. All \Table{llm}
rows share this setup.

\inlinesection{Quality.} \ours leads every rotation codec at every
bit width (\Table{llm}). In $b{=}4$ the WikiText-2 gap is modest
($+2.7\%$ vs.\ $+3.1/4.4/8.0\%$); in $b{=}2$ the separation is
decisive ($+34.7\%$ vs.\ $+63/187/772\%$).

\inlinesection{Needle-in-a-haystack.}
Multi-key random-value retrieval ($4$k--$128$k context, $4$ samples
per cell; App.~\ref{app:llm-niah}). At $b{=}4$ all codecs reach
$1.00$. At $b{=}3$, \ours holds $1.00$; PolarQuant drops to $0.86$
average. At $b{=}2$ (\fig{llm_pareto}), only \ours/$0.81$ and
\ours-QJL/$0.83$ retain recall; PolarQuant and TurboQuant-QJL collapse
($0.04/0.01$), tracking their perplexity divergence.

\begin{figure}[t]
    \centering
    \begin{subfigure}[b]{0.4\linewidth}
        \includegraphics[width=\linewidth]{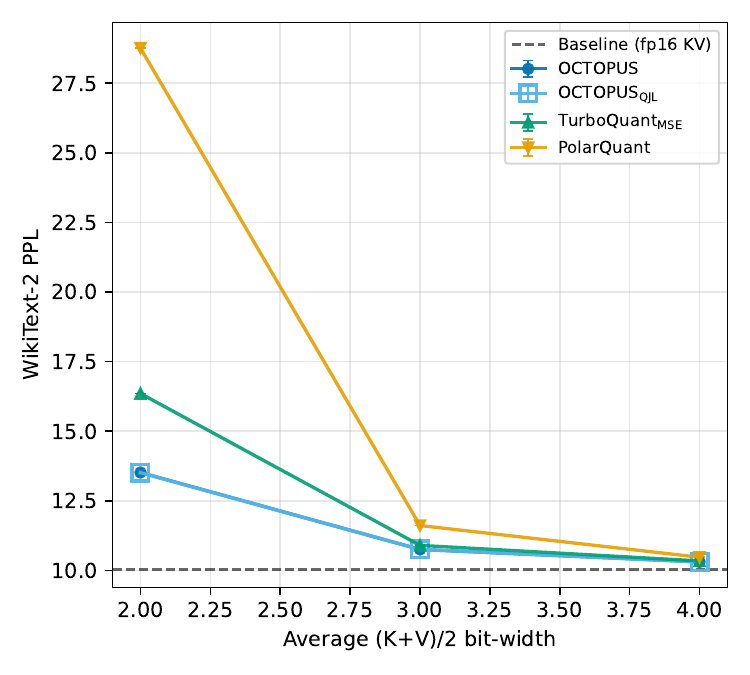}
        \caption{PPL ($\downarrow$) vs.\ bits}
    \end{subfigure}\hfill
    \begin{subfigure}[b]{0.4\linewidth}
        \includegraphics[width=\linewidth]{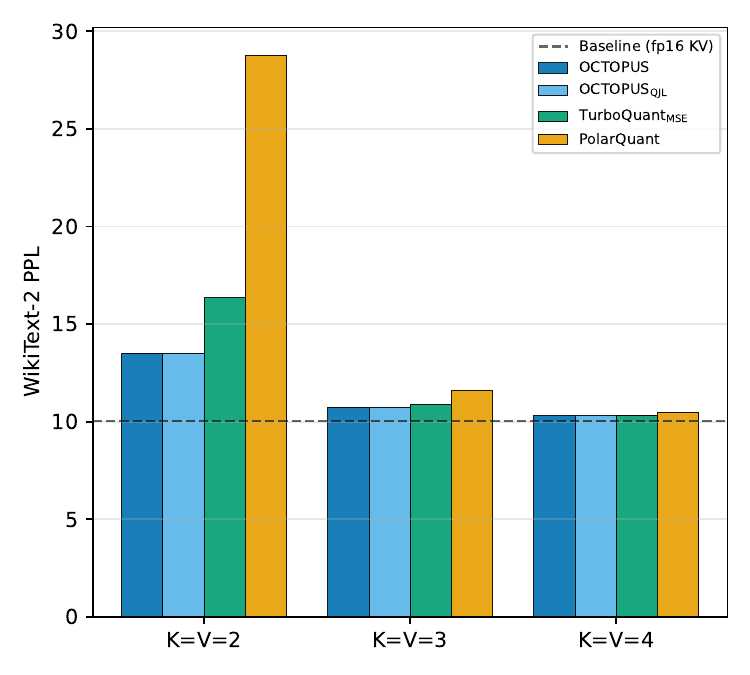}
        \caption{$\Delta$PPL ($\downarrow$) per bit width}
    \end{subfigure}\\[4pt]
    \begin{subfigure}[b]{0.4\linewidth}
        \includegraphics[width=\linewidth]{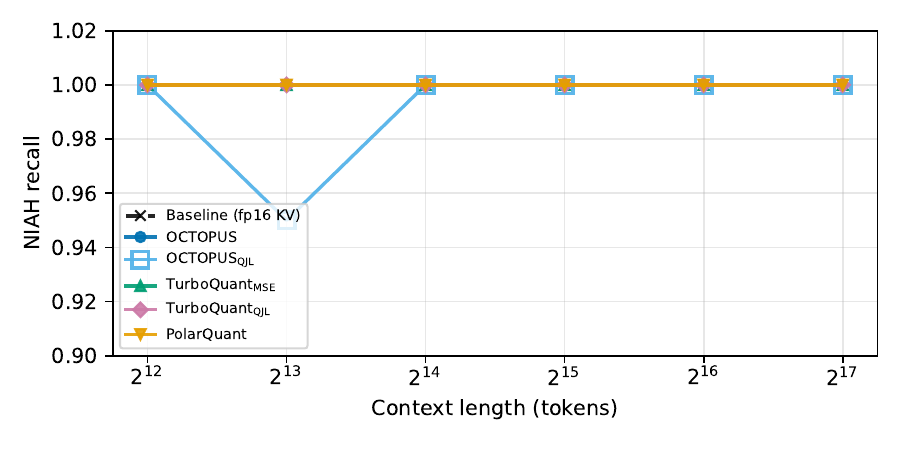}
        \caption{NIAH recall ($\uparrow$) ($b{=}4$)}
    \end{subfigure}\hfill
    \begin{subfigure}[b]{0.4\linewidth}
        \includegraphics[width=\linewidth]{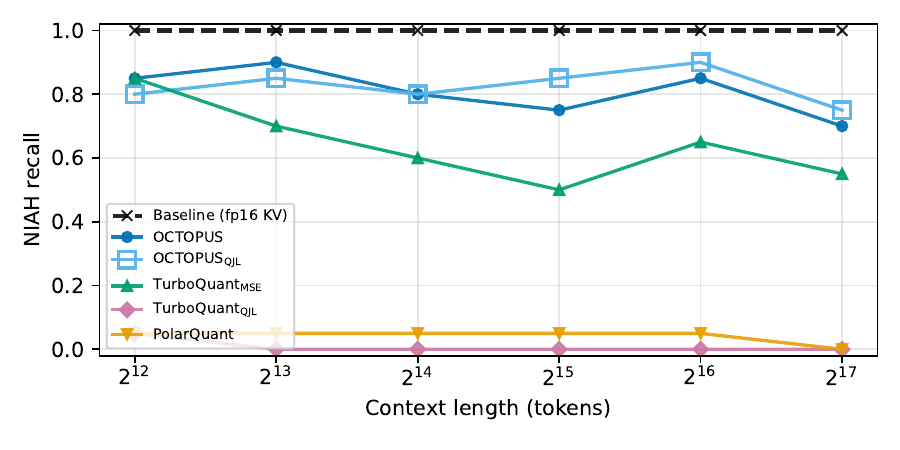}
        \caption{NIAH recall ($\uparrow$) ($b{=}2$)}
    \end{subfigure}
    \titlecaption{Qwen2.5-7B rate-quality and needle recall}{%
      \ours does not collapse at $b{=}2$ on either PPL or retrieval;
      at $b{=}4$ all codecs retain baseline recall.}%
    \label{fig:llm_pareto}
\end{figure}

\subsection{Autoregressive video and audio}
\label{sec:results:av}

\inlinesection{Setup.} The video experiments use two Wan-1.3B
autoregressive DiTs with 30 blocks, $d_h{=}64$, and bf16 activations:
CausVid~\citep{yin2025causvid}, which generates in $3$-frame chunks,
and Causal Forcing~\citep{zhu2026causal}, which advances frame by
frame. We compress the attention KV cache during generation with a
residual window of one native-precision frame, value group size
$g{=}32$, and no boundary-block protection. For each model and bit
width, every codec is run on the same $100$ prompts with byte-identical
initial noise; the reported deltas therefore isolate the codec rather
than prompt or sampling variation. We measure LPIPS~\citep{zhang2018lpips},
PSNR, SSIM, and latent cosine against the uncompressed rollout.

The audio experiment uses AAR~\citep{qiu2024efficient}, a 16-block
next-scale autoregressive model. We follow the released CLAP-conditioned
inference path: $100$ random $10$\,s AudioSet-20k clips are encoded by
CLAP and used as conditioning, while the model generates the
corresponding audio continuation/sample under compressed KV. The cache
recipe matches the video sweep except for the autoregressive unit and
group size: residual window one native-precision scale, $V$ group
$g{=}16$, and no per-layer protection. We report LSD, log-mel MSE, SNR,
and latent cosine against the uncompressed AAR output.

\begin{table}[t]
\centering
\titlecaption{Compressed-KV video and audio, symmetric $K{=}V$}{Per-prompt min / $\mu$ / max across all prompts. CausVid: residual window $1$ frame; Causal Forcing: $1$ frame, frame-wise; AAR: $100$ random $10$\,s AudioSet-20k clips as CLAP-audio conditioning, residual window $1$ scale, $g{=}16$. KV$\times$ is the video ($g{=}32$) compression ratio. Mean ($\mu$) is \textbf{bold} for best, \underline{underlined} for runner-up per bit-width block.}
\label{tab:video_audio}
\tiny
\setlength{\tabcolsep}{2.5pt}
\resizebox{\columnwidth}{!}{
\begin{tabular}{llr|rrr|rrr|rrr|rrr|rrr|rrr}
\multicolumn{3}{c|}{} & \multicolumn{6}{c|}{CausVid} & \multicolumn{6}{c|}{Causal Forcing} & \multicolumn{6}{c}{AAR (audio)} \\
\multicolumn{3}{c|}{} & \multicolumn{3}{c}{LPIPS$\downarrow$} & \multicolumn{3}{c|}{PSNR$\uparrow$} & \multicolumn{3}{c}{LPIPS$\downarrow$} & \multicolumn{3}{c|}{PSNR$\uparrow$} & \multicolumn{3}{c}{LSD$\downarrow$} & \multicolumn{3}{c}{SNR$\uparrow$} \\
b & codec & KV$\times$ & min & $\mu$ & max & min & $\mu$ & max & min & $\mu$ & max & min & $\mu$ & max & min & $\mu$ & max & min & $\mu$ & max \\
\midrule
4 & TurboQuant-MSE & $2.4\times$ & 0.008 & 0.045 & 0.130 & 17.4 & 26.5 & 39.5 & 0.140 & 0.334 & 0.637 & 9.2 & 14.6 & 21.8 & 0.0 & 6.4 & 9.9 & -2.1 & \underline{2.1} & 120.0 \\
4 & TurboQuant-QJL & $2.4\times$ & 0.014 & 0.096 & 0.265 & 15.5 & 22.6 & 34.8 & 0.213 & 0.421 & 0.663 & 8.1 & 13.1 & 18.6 & 0.0 & \textbf{6.2} & 10.0 & -2.4 & 1.8 & 120.0 \\
4 & PolarQuant & $2.4\times$ & 0.006 & \textbf{0.037} & 0.124 & 19.5 & \textbf{27.8} & 42.2 & 0.113 & \textbf{0.301} & 0.582 & 11.1 & \textbf{15.4} & 24.2 & 0.0 & \underline{6.3} & 9.6 & -2.2 & \textbf{2.2} & 120.0 \\
4 & \textbf{\ours} & $2.3\times$ & 0.005 & \underline{0.038} & 0.115 & 19.3 & \underline{27.5} & 42.8 & 0.142 & \underline{0.309} & 0.522 & 10.2 & \underline{15.2} & 20.2 & 0.0 & \textbf{6.2} & 9.5 & -2.8 & \textbf{2.2} & 120.0 \\
4 & \textbf{\ours-QJL} & $2.2\times$ & 0.006 & \underline{0.038} & 0.117 & 19.3 & \underline{27.5} & 42.6 & 0.153 & 0.310 & 0.552 & 9.6 & \underline{15.2} & 20.3 & 0.0 & \textbf{6.2} & 9.8 & -4.0 & \textbf{2.2} & 120.0 \\
\midrule
3 & TurboQuant-MSE & $2.7\times$ & 0.013 & 0.098 & 0.263 & 15.5 & 22.6 & 35.5 & 0.207 & 0.423 & 0.661 & 8.5 & \underline{13.1} & 18.8 & 0.0 & 6.5 & 10.0 & -2.3 & \underline{1.6} & 120.0 \\
3 & TurboQuant-QJL & $2.7\times$ & 0.076 & 0.262 & 0.449 & 12.8 & 17.8 & 28.0 & 0.624 & 0.779 & 0.910 & 5.9 & 8.4 & 11.2 & 9.6 & 12.7 & 16.2 & -19.5 & -5.4 & -0.1 \\
3 & PolarQuant & $2.8\times$ & 0.013 & \underline{0.093} & 0.263 & 16.1 & \underline{22.8} & 34.9 & 0.218 & 0.402 & 0.691 & 9.0 & \underline{13.1} & 19.5 & 0.0 & \textbf{6.3} & 10.1 & -2.9 & \textbf{1.7} & 120.0 \\
3 & \textbf{\ours} & $2.7\times$ & 0.012 & \textbf{0.078} & 0.228 & 16.5 & \textbf{23.7} & 36.0 & 0.196 & \underline{0.390} & 0.628 & 8.4 & \textbf{13.5} & 20.8 & 0.0 & \underline{6.4} & 10.1 & -2.6 & 1.5 & 120.0 \\
3 & \textbf{\ours-QJL} & $2.5\times$ & 0.012 & \textbf{0.078} & 0.225 & 16.5 & \textbf{23.7} & 36.4 & 0.207 & \textbf{0.389} & 0.624 & 8.8 & \textbf{13.5} & 20.4 & 0.0 & \underline{6.4} & 10.1 & -3.0 & \underline{1.6} & 120.0 \\
\midrule
2 & TurboQuant-MSE & $3.2\times$ & 0.055 & 0.261 & 0.450 & 12.9 & \underline{17.9} & 28.1 & 0.616 & 0.777 & 0.907 & 5.9 & 8.4 & 11.6 & 10.2 & 12.7 & 15.7 & -17.6 & -5.3 & 0.3 \\
2 & TurboQuant-QJL & $3.2\times$ & 0.265 & 0.579 & 0.830 & 9.5 & 13.1 & 19.8 & 0.708 & 0.816 & 0.997 & 5.5 & 7.1 & 8.2 & 10.5 & 13.2 & 16.5 & -28.9 & -6.0 & -0.2 \\
2 & PolarQuant & $3.3\times$ & 0.045 & \underline{0.251} & 0.469 & 13.0 & \underline{17.9} & 28.3 & 0.575 & \underline{0.727} & 0.898 & 5.5 & \underline{8.6} & 11.2 & 9.9 & 12.6 & 16.0 & -21.3 & -5.3 & 2.3 \\
2 & \textbf{\ours} & $3.1\times$ & 0.029 & \textbf{0.178} & 0.366 & 13.8 & \textbf{19.7} & 30.9 & 0.341 & \textbf{0.581} & 0.821 & 6.8 & \textbf{10.9} & 15.0 & 0.0 & \textbf{6.8} & 13.6 & -9.6 & \textbf{1.1} & 120.0 \\
2 & \textbf{\ours-QJL} & $2.8\times$ & 0.028 & \textbf{0.178} & 0.364 & 13.9 & \textbf{19.7} & 30.9 & 0.377 & \textbf{0.581} & 0.815 & 6.7 & \textbf{10.9} & 15.0 & 0.0 & \underline{6.9} & 14.4 & -8.2 & \underline{1.0} & 120.0 \\
\end{tabular}
}

\end{table}

\inlinesection{Findings.} \Table{video_audio} reports per-prompt
min/$\mu$/max. At $b{=}4$ all codecs overlap ($\leq\!3\%$).
At $b{=}2$ the picture changes sharply: on Causal Forcing,
TurboQuant-QJL reaches a worst-case LPIPS of $1.00$ and a mean of
$0.82$---effectively random noise---while \ours stays at
$0.58$/$0.82$ (min/max).
On audio, the $10$\,s AudioSet-conditioned sweep is forgiving at
$b{=}4$ (all codecs lie within $0.19$\,dB LSD), but separates sharply at
$b{=}2$: TurboQuant-MSE, TurboQuant-QJL, and PolarQuant rise to
$12.6$--$13.2$\,dB mean LSD with negative mean SNR, while \ours remains
at $6.75$\,dB LSD and $+1.07$\,dB SNR. Even PolarQuant, the strongest
non-\ours baseline, degrades $1.4{\times}$ faster than \ours on mean
LPIPS as bits decrease (CausVid: $0.25$ vs.\ $0.18$). Stills from the videos are provided in
App.~\ref{app:stills}.

\subsection{Cross-modality patterns}
\label{sec:results:patterns}

All four modalities show the same pattern.
\textbf{(i)~\ours matches or beats every rotation baseline in the
low-bit regimes where compression quality matters most.} Exceptions:
$b{=}4$ video (Polar within $3\%$) and AAR at $b{=}3$, where PolarQuant
is slightly better on mean LSD/SNR under $10$\,s AudioSet conditioning.
\textbf{(ii)~Competing codecs degrade catastrophically below $b{=}4$:}
TurboQuant-QJL collapses to perceptual noise on CF at $b{=}2$
(max LPIPS\,${\approx}\,1.0$); PolarQuant's worst prompt at $b{=}2$
hits LPIPS\,$0.90$.  \ours's worst prompt stays at $0.82$---still
degraded, but coherent.  The extra $b_{\mathrm{dir}}$ bit
(\eqn{bitsplit-opt}) provides a disproportionate MSE reduction at
tight budgets.
\textbf{(iii)~QJL buys IP accuracy, not reconstruction-path quality;}
\ours-QJL fits only score-attention deployments
(\Table{qjl_accounting}).

\inlinesection{Limitations.} 
The improved rate-quality point is not free in wall-clock time:
\ours adds more arithmetic than scalar Lloyd-Max decoding and remains slower than a bf16 SDPA path, so it is most attractive when KV bandwidth or capacity is the bottleneck (App.~\ref{app:kernel-speed}).

\section{Conclusion}
\label{sec:conclusion}

\ours is a rotation-preconditioned KV codec that quantizes the rotated
unit direction in contiguous triplets: an octahedral
map~\citep{engelhardt2008octahedron,cigolle2014survey} collapses each
3-coordinate block to a pair of scalars on $[-1,1]^{2}$, and
Lloyd-Max~\citep{lloyd1982least,max1960quantizing} quantizers matched
to the norm and oct-coordinate marginals reduce the triplet to three
integers under the asymmetric $(b{+}1,b{-}1)$ bit split. The codec
inherits the data-oblivious online guaranties of
TurboQuant~\citep{zandieh2025turboquant} and combines without
modification with the 1-bit QJL~\citep{zandieh2024qjl} residual.
Across text, video, and audio, it matches or beats prior
rotation codecs, with a lead that grows as bits drop. 
At $b{=}2$, \ours is often the only codec that
does not collapse in long-context recall, and the only codec that
retains usable perceptual quality in autoregressive video.

\bibliographystyle{ieeenat_fullname}
\bibliography{references}

@String{ARXIV            = {arXiv preprint}}

@String{CVPR             = {Conference on Computer Vision and Pattern Recognition (CVPR)}}

@String{EMNLP            = {Conference on Empirical Methods in Natural Language Processing (EMNLP)}}

@String{ICML             = {International Conference on Machine Learning (ICML)}}

@String{ITIT             = {IEEE Transactions on Information Theory}}

@String{MLS              = {Proceedings of Machine Learning and Systems}}

@String{NIPS             = {Neural Information Processing Systems (NeurIPS)}}

@String{TACL             = {Transactions of the Association for Computational Linguistics (TACL)}}

@article{cigolle2014survey,
  author           = {Cigolle, Zina H. and Donow, Sam and Evangelakos, Daniel and Mara, Michael and McGuire, Morgan and Meyer, Quirin},
  title            = {A Survey of Efficient Representations for Independent Unit Vectors},
  journal          = {Journal of Computer Graphics Techniques (JCGT)},
  year             = {2014},
  volume           = {3},
  number           = {2},
  pages            = {1--30},
  url              = {http://jcgt.org/published/0003/02/01/}
}

@inproceedings{engelhardt2008octahedron,
  author           = {Engelhardt, Thomas and Dachsbacher, Carsten},
  title            = {Octahedron Environment Maps},
  booktitle        = {International Symposium on Vision, Modeling, and Visualization (VMV)},
  year             = {2008}
}

@misc{kapoulkine2026tangent,
  author           = {Kapoulkine, Arseny},
  title            = {Quantizing tangent frames},
  year             = {2026},
  howpublished     = {Blog post, \url{https://zeux.io/2026/04/30/quantizing-tangent-frames/}},
  note             = {Accessed 2026-04-30}
}

@article{zandieh2024qjl,
  author           = {Zandieh, Amir and Daliri, Majid and Han, Insu},
  title            = {{QJL}: 1-Bit Quantized {JL} Transform for {KV} Cache Quantization with Zero Overhead},
  journal          = ARXIV,
  year             = {2024},
  url              = {https://arxiv.org/abs/2406.03482},
  eprint           = {2406.03482},
  archiveprefix    = {arXiv},
  primaryclass     = {cs.LG}
}

@article{zandieh2025turboquant,
  author           = {Zandieh, Amir and Daliri, Majid and Hadian, Majid and Mirrokni, Vahab},
  title            = {{TurboQuant}: Online Vector Quantization with Near-optimal Distortion Rate},
  journal          = ARXIV,
  year             = {2025},
  url              = {https://arxiv.org/abs/2504.19874},
  eprint           = {2504.19874},
  archiveprefix    = {arXiv},
  primaryclass     = {cs.LG}
}

@article{wu2025polarquant,
  author           = {Wu, Songhao and Lv, Ang and Feng, Xiao and Zhang, Yufei and Zhang, Xun and Yin, Guojun and Lin, Wei and Yan, Rui},
  title            = {{PolarQuant}: Leveraging Polar Transformation for Efficient Key Cache Quantization and Decoding Acceleration},
  journal          = ARXIV,
  year             = {2025},
  url              = {https://arxiv.org/abs/2502.00527},
  eprint           = {2502.00527},
  archiveprefix    = {arXiv},
  primaryclass     = {cs.CL}
}

@article{han2025polarquant,
  author           = {Han, Insu and Kacham, Praneeth and Karbasi, Amin and Mirrokni, Vahab and Zandieh, Amir},
  title            = {{PolarQuant}: Quantizing {KV} Caches with Polar Transformation},
  journal          = ARXIV,
  year             = {2025},
  url              = {https://arxiv.org/abs/2502.02617},
  eprint           = {2502.02617},
  archiveprefix    = {arXiv},
  primaryclass     = {cs.LG},
  note             = {Not to be confused with Wu et al.\ (arXiv:2502.00527), which shares the name ``PolarQuant'' but proposes a different method.}
}

@article{qiu2024efficient,
  author           = {Qiu, Kai and Li, Xiang and Chen, Hao and Sun, Jie and Wang, Jinglu and Lin, Zhe and Savvides, Marios and Raj, Bhiksha},
  title            = {Efficient Autoregressive Audio Modeling via Next-Scale Prediction},
  journal          = ARXIV,
  year             = {2024},
  url              = {https://arxiv.org/abs/2408.09027},
  eprint           = {2408.09027},
  archiveprefix    = {arXiv},
  primaryclass     = {cs.SD}
}

@inproceedings{yin2025causvid,
  author           = {Yin, Tianwei and Zhang, Qiang and Zhang, Richard and Freeman, William T. and Durand, Fr{\'e}do and Shechtman, Eli and Huang, Xun},
  title            = {From Slow Bidirectional to Fast Autoregressive Video Diffusion Models},
  booktitle        = CVPR,
  year             = {2025},
  url              = {https://arxiv.org/abs/2412.07772},
  eprint           = {2412.07772},
  archiveprefix    = {arXiv}
}

@article{zhu2026causal,
  author           = {Zhu, Hongzhou and Zhao, Min and He, Guande and Su, Hang and Li, Chongxuan and Zhu, Jun},
  title            = {Causal Forcing: Autoregressive Diffusion Distillation Done Right for High-Quality Real-Time Interactive Video Generation},
  journal          = ARXIV,
  year             = {2026},
  url              = {https://arxiv.org/abs/2602.02214},
  eprint           = {2602.02214},
  archiveprefix    = {arXiv}
}

@inproceedings{ainslie2023gqa,
  author           = {Ainslie, Joshua and Lee-Thorp, James and de Jong, Michiel and Zemlyanskiy, Yury and Lebron, Federico and Sanghai, Sumit},
  title            = {{GQA}: Training Generalized Multi-Query Transformer Models from Multi-Head Checkpoints},
  booktitle        = EMNLP,
  year             = {2023},
  pages            = {4895--4901}
}

@article{ashkboos2024quarot,
  author           = {Ashkboos, Saleh and Mohtashami, Amirkeivan and Croci, Maximilian L. and Li, Bo and Cameron, Pashmina and Jaggi, Martin and Alistarh, Dan and Hoefler, Torsten and Hensman, James},
  title            = {{QuaRot}: Outlier-Free 4-Bit Inference in Rotated {LLMs}},
  journal          = ARXIV,
  year             = {2024}
}

@article{cai2024pyramidkv,
  author           = {Cai, Zefan and Zhang, Yichi and Gao, Bofei and Liu, Yuliang and Liu, Tianyu and Lu, Keming and Xiong, Wayne and Dong, Yue and Chang, Baobao and Hu, Junjie and others},
  title            = {{PyramidKV}: Dynamic {KV} Cache Compression Based on Pyramidal Information Funneling},
  journal          = ARXIV,
  year             = {2024}
}

@article{chee2023quip,
  author           = {Chee, Jerry and Cai, Yaohui and Kuleshov, Volodymyr and De Sa, Christopher M.},
  title            = {{QuIP}: 2-Bit Quantization of Large Language Models with Guarantees},
  journal          = NIPS,
  year             = {2023},
  volume           = {36},
  pages            = {4396--4429}
}

@article{dettmers2022gpt3int8,
  author           = {Dettmers, Tim and Lewis, Mike and Belkada, Younes and Zettlemoyer, Luke},
  title            = {{GPT3.int8()}: 8-Bit Matrix Multiplication for Transformers at Scale},
  journal          = NIPS,
  year             = {2022},
  volume           = {35},
  pages            = {30318--30332}
}

@article{dong2024qaq,
  author           = {Shichen Dong and Wenfang Cheng and Jiayu Qin and Wei Wang},
  title            = {{QAQ}: Quality Adaptive Quantization for {LLM} {KV} Cache},
  journal          = {2025 IEEE/CVF International Conference on Computer Vision Workshops (ICCVW)},
  year             = {2024},
  doi              = {10.1109/ICCVW69036.2025.00267}
}

@article{dubey2024llama3,
  author           = {Dubey, Abhimanyu and Jauhri, Abhinav and Pandey, Abhinav and Kadian, Abhishek and Al-Dahle, Ahmad and Letman, Aiesha and Mathur, Akhil and Schelten, Alan and Yang, Amy and Fan, Angela and others},
  title            = {The {Llama 3} Herd of Models},
  journal          = ARXIV,
  year             = {2024}
}

@article{frantar2022gptq,
  author           = {Frantar, Elias and Ashkboos, Saleh and Hoefler, Torsten and Alistarh, Dan},
  title            = {{GPTQ}: Accurate Post-Training Quantization for Generative Pre-Trained Transformers},
  journal          = ARXIV,
  year             = {2022}
}

@article{fu2024data,
  author           = {Fu, Yao and Panda, Rameswar and Niu, Xinyao and Yue, Xiang and Hajishirzi, Hannaneh and Kim, Yoon and Peng, Hao},
  title            = {Data Engineering for Scaling Language Models to {128K} Context},
  journal          = ARXIV,
  year             = {2024}
}

@article{gersho1979asymptotically,
  author           = {Gersho, Allen},
  title            = {Asymptotically Optimal Block Quantization},
  journal          = {IEEE Transactions on Information Theory},
  year             = {1979},
  volume           = {25},
  number           = {4},
  pages            = {373--380}
}

@article{gersho1982structure,
  author           = {Gersho, Allen},
  title            = {On the Structure of Vector Quantizers},
  journal          = ITIT,
  year             = {1982},
  volume           = {28},
  number           = {2},
  pages            = {157--166}
}

@article{han2025balancekv,
  author           = {Han, Insu and Kapralov, Michael and Kochetkova, Ekaterina and Sheth, Kshiteej and Zandieh, Amir},
  title            = {{BalanceKV}: {KV} Cache Compression through Discrepancy Theory},
  journal          = ARXIV,
  year             = {2025}
}

@article{hooper2024kvquant,
  author           = {Hooper, Coleman and Kim, Sehoon and Mohammadzadeh, Hiva and Mahoney, Michael W. and Shao, Yakun Sophia and Keutzer, Kurt and Gholami, Amir},
  title            = {{KVQuant}: Towards 10 Million Context Length {LLM} Inference with {KV} Cache Quantization},
  journal          = ARXIV,
  year             = {2024}
}

@misc{kamradt2023needle,
  author           = {Kamradt, Greg},
  title            = {Needle in a Haystack --- Pressure Testing {LLMs}},
  year             = {2023},
  howpublished     = {\url{https://github.com/gkamradt/LLMTest_NeedleInAHaystack}}
}

@inproceedings{hsieh2024ruler,
  author           = {Hsieh, Cheng-Ping and Sun, Simeng and Kriman, Samuel and Acharya, Shantanu and Rekesh, Dima and Jia, Fei and Ginsburg, Boris},
  title            = {{RULER}: What's the Real Context Size of Your Long-Context Language Models?},
  booktitle        = {Proceedings of the Conference on Language Modeling (COLM)},
  year             = {2024}
}

@article{kang2024gear,
  author           = {Kang, Hao and Zhang, Qingru and Kundu, Souvik and Jeong, Geonhwa and Liu, Zaoxing and Krishna, Tushar and Zhao, Tuo},
  title            = {{GEAR}: An Efficient {KV} Cache Compression Recipe for Near-Lossless Generative Inference of {LLM}},
  journal          = ARXIV,
  year             = {2024}
}

@article{kim2024lexico,
  author           = {Kim, Junhyuck and Park, Jongho and Cho, Jaewoong and Papailiopoulos, Dimitris},
  title            = {{Lexico}: Extreme {KV} Cache Compression via Sparse Coding over Universal Dictionaries},
  journal          = ARXIV,
  year             = {2024}
}

@article{li2024snapkv,
  author           = {Li, Yuhong and Huang, Yingbing and Yang, Bowen and Venkitesh, Bharat and Locatelli, Acyr and Ye, Hanchen and Cai, Tianle and Lewis, Patrick and Chen, Deming},
  title            = {{SnapKV}: {LLM} Knows What You Are Looking For Before Generation},
  journal          = ARXIV,
  year             = {2024}
}

@article{lin2024awq,
  author           = {Lin, Ji and Tang, Jiaming and Tang, Haotian and Yang, Shang and Chen, Wei-Ming and Wang, Wei-Chen and Xiao, Guangxuan and Dang, Xingyu and Gan, Chuang and Han, Song},
  title            = {{AWQ}: Activation-Aware Weight Quantization for On-Device {LLM} Compression and Acceleration},
  journal          = {Proceedings of Machine Learning and Systems},
  year             = {2024},
  volume           = {6},
  pages            = {87--100}
}

@article{liu2024scissorhands,
  author           = {Liu, Zichang and Desai, Aditya and Liao, Fangshuo and Wang, Weitao and Xie, Victor and Xu, Zhaozhuo and Kyrillidis, Anastasios and Shrivastava, Anshumali},
  title            = {Scissorhands: Exploiting the Persistence of Importance Hypothesis for {LLM} {KV} Cache Compression at Test Time},
  journal          = NIPS,
  year             = {2024},
  volume           = {36}
}

@article{liu2024kivi,
  author           = {Liu, Zirui and Yuan, Jiayi and Jin, Hongye and Zhong, Shaochen and Xu, Zhaozhuo and Braverman, Vladimir and Chen, Beidi and Hu, Xia},
  title            = {{KIVI}: A Tuning-Free Asymmetric 2-Bit Quantization for {KV} Cache},
  journal          = ARXIV,
  year             = {2024}
}

@article{lloyd1982least,
  author           = {Lloyd, Stuart},
  title            = {Least Squares Quantization in {PCM}},
  journal          = ITIT,
  year             = {1982},
  volume           = {28},
  number           = {2},
  pages            = {129--137}
}

@article{max1960quantizing,
  author           = {Max, Joel},
  title            = {Quantizing for Minimum Distortion},
  journal          = {IRE Transactions on Information Theory},
  year             = {1960},
  volume           = {6},
  number           = {1},
  pages            = {7--12}
}

@article{panter1951quantization,
  author           = {Panter, Philip F. and Dite, Ward},
  title            = {Quantization Distortion in Pulse-Count Modulation with Nonuniform Spacing of Levels},
  journal          = {Proceedings of the IRE},
  year             = {1951},
  volume           = {39},
  number           = {1},
  pages            = {44--48}
}

@article{shah2024flashattention3,
  author           = {Shah, Jay and Bikshandi, Ganesh and Zhang, Ying and Thakkar, Vijay and Ramani, Pradeep and Dao, Tri},
  title            = {{FlashAttention-3}: Fast and Accurate Attention with Asynchrony and Low-Precision},
  journal          = ARXIV,
  year             = {2024}
}

@article{su2025rotatekv,
  author           = {Su, Zunhai and Chen, Zhe and Shen, Wang and Wei, Hanyu and Li, Linge and Yu, Huangqi and Yuan, Kehong},
  title            = {{RotateKV}: Accurate and Robust 2-Bit {KV} Cache Quantization for {LLMs} via Outlier-Aware Adaptive Rotations},
  journal          = ARXIV,
  year             = {2025}
}

@inproceedings{xiao2023smoothquant,
  author           = {Xiao, Guangxuan and Lin, Ji and Seznec, Mickael and Wu, Hao and Demouth, Julien and Han, Song},
  title            = {{SmoothQuant}: Accurate and Efficient Post-Training Quantization for Large Language Models},
  booktitle        = ICML,
  year             = {2023},
  pages            = {38087--38099}
}

@article{xiao2023streamingllm,
  author           = {Xiao, Guangxuan and Tian, Yuandong and Chen, Beidi and Han, Song and Lewis, Mike},
  title            = {Efficient Streaming Language Models with Attention Sinks},
  journal          = ARXIV,
  year             = {2023}
}

@article{yang2024notoken,
  author           = {Yang, June Yong and Kim, Byeongwook and Bae, Jeongin and Kwon, Beomseok and Park, Gunho and Yang, Eunho and Kwon, Se Jung and Lee, Dongsoo},
  title            = {No Token Left Behind: Reliable {KV} Cache Compression via Importance-Aware Mixed Precision Quantization},
  journal          = ARXIV,
  year             = {2024}
}

@article{yue2024wkvquant,
  author           = {Yue, Yuxuan and Yuan, Zhihang and Duanmu, Haojie and Zhou, Sifan and Wu, Jianlong and Nie, Liqiang},
  title            = {{WKVQuant}: Quantizing Weight and Key/Value Cache for Large Language Models Gains More},
  journal          = ARXIV,
  year             = {2024}
}

@phdthesis{zador1964development,
  author           = {Zador, Paul L.},
  title            = {Development and Evaluation of Procedures for Quantizing Multivariate Distributions},
  year             = {1964},
  school           = {Stanford University}
}

@article{zhang2024kv1bit,
  author           = {Zhang, Tianyi and Yi, Jonah and Xu, Zhaozhuo and Shrivastava, Anshumali},
  title            = {{KV} Cache Is 1 Bit Per Channel: Efficient Large Language Model Inference with Coupled Quantization},
  journal          = ARXIV,
  year             = {2024}
}

@inproceedings{zhang2018lpips,
  author           = {Zhang, Richard and Isola, Phillip and Efros, Alexei A. and Shechtman, Eli and Wang, Oliver},
  title            = {The Unreasonable Effectiveness of Deep Features as a Perceptual Metric},
  booktitle        = CVPR,
  year             = {2018}
}

@article{zhang2024h2o,
  author           = {Zhang, Zhenyu and Sheng, Ying and Zhou, Tianyi and Chen, Tianlong and Zheng, Lianmin and Cai, Ruisi and Song, Zhao and Tian, Yuandong and R{\'e}, Christopher and Barrett, Clark and others},
  title            = {{H2O}: Heavy-Hitter Oracle for Efficient Generative Inference of Large Language Models},
  journal          = NIPS,
  year             = {2024},
  volume           = {36}
}

@inproceedings{dao2022flashattention,
  author           = {Tri Dao and Daniel Y. Fu and Stefano Ermon and A. Rudra and Christopher R'e},
  title            = {{FlashAttention}: Fast and Memory-Efficient Exact Attention with {IO}-Awareness},
  booktitle        = NIPS,
  year             = {2022},
  doi              = {10.52202/068431-1189}
}

@article{liu2024lostinmiddle,
  author           = {Liu, Nelson F. and Lin, Kevin and Hewitt, John and Paranjape, Ashwin and Bevilacqua, Michele and Petroni, Fabio and Liang, Percy},
  title            = {Lost in the Middle: How Language Models Use Long Contexts},
  journal          = TACL,
  year             = {2024},
  volume           = {12},
  pages            = {157--173},
  doi              = {10.1162/tacl_a_00638}
}

@inproceedings{tillet2019triton,
  author           = {Tillet, Philippe and Kung, H. T. and Cox, David},
  title            = {{Triton}: An Intermediate Language and Compiler for Tiled Neural Network Computations},
  booktitle        = {Proceedings of the 3rd ACM SIGPLAN International Workshop on Machine Learning and Programming Languages},
  year             = {2019},
  doi              = {10.1145/3315508.3329973}
}

@inproceedings{zhao2024atom,
  author           = {Zhao, Yilong and Lin, Chien-Yu and Zhu, Kan and Ye, Zihao and Chen, Lequn and Zheng, Size and Ceze, Luis and Krishnamurthy, Arvind and Chen, Tianqi and Kasikci, Baris},
  title            = {{Atom}: Low-Bit Quantization for Efficient and Accurate {LLM} Serving},
  booktitle        = MLS,
  year             = {2024},
  volume           = {6},
  pages            = {196--209}
}

\clearpage
\appendix

\section{Encoder and decoder algorithms}
\label{app:algos}

\Algo{encode} gives the encoder as it is implemented: one pass per
key, with all intermediate state (rotated vector, triplet norms,
octahedral coordinates, integer indices) kept in registers. At decode
time, \Algo{decode} fuses bit unpacking, octahedral decode, centroid
gather, value dequantization, and online softmax into a single split-K
flash-decoding kernel in the style of
\citet{dao2022flashattention,shah2024flashattention3}. Keys are
reconstructed in registers on the fly, and the only per-step memory
traffic is the packed KV state and the running softmax statistics
$(m,\ell,\mathrm{acc})$. A reduction kernel merges the per-split
triples with the standard flash-decoding identity. These kernels are
an artefact of the implementation, not of the algorithm: every number
in \Section{results} can be reproduced with a pure PyTorch reference
of the same mathematical operations at proportionally higher
wall-clock cost.

\begin{algorithm}[H]
\caption{\ours encoder with joint $(\xi,\eta,\rho)$ rounding
(\sect{method:rounding}), one program per key. Line~14 is the
3$\times$3 optimal-rounding refinement; setting the candidate set
$\Delta$ to $\{(0,0)\}$ recovers the legacy scalar-rounding baseline.}
\label{alg:encode}
\begin{algorithmic}[1]
\Require $\vect{k}\in\real^{d}$, sign vector $\vect{s}\in\{\pm1\}^{d}$,
    codebook centroids $\mathcal{C}_{\xi}\in\real^{2^{b_{\mathrm{dir}}}}$,
    $\mathcal{C}_{\rho}\in\real^{2^{b_{\mathrm{nrm}}}}$,
    boundaries $\mathcal{B}_{\xi},\mathcal{B}_{\rho}$ (midpoints of adjacent centroids),
    candidate set $\Delta\subseteq\{-1,0,1\}^{2}$ (default $\Delta=\{-1,0,1\}^{2}$).
\State $\gamma\gets\sqrt{\sum_{i=1}^{d} k_i^{2}}$ \Comment{fp32 norm}
\State $\vect{\tilde u}\gets \vect{k}/\max(\gamma,\epsilon)$
\State $\vect{u}\gets \matt{H}\,(\vect{s}\odot\vect{\tilde u})$
    \Comment{in-register WHT butterfly with normalized $\matt{H}$}
\For{$i = 0$ \textbf{to} $n_{\mathrm{tri}}-1$}
    \State $\vect{t}_i\gets \vect{u}_{3i:3i+3}$;\quad $(x,y,z)\gets\vect{t}_i$
    \State $(p_x,p_y,p_z)\gets (x,y,z)/\max(|x|+|y|+|z|,\epsilon)$
    \If{$p_z \geq 0$}
        \State $(\xi_i,\eta_i)\gets (p_x,p_y)$
    \Else
        \State $(\xi_i,\eta_i)\gets \bigl(\mathrm{sign}(p_x)(1-|p_y|),\;\mathrm{sign}(p_y)(1-|p_x|)\bigr)$
    \EndIf
    \State $j_x\gets \mathrm{searchsorted}(\mathcal{B}_{\xi},\xi_i)$;\;
           $j_y\gets \mathrm{searchsorted}(\mathcal{B}_{\xi},\eta_i)$ \Comment{scalar seed}
    \State $s^{*}\gets -\infty$;\;\; $(J_x,J_y)\gets(j_x,j_y)$
    \For{$(\delta_x,\delta_y)\in\Delta$}
        \State $j'_x\gets\mathrm{clip}(j_x+\delta_x,0,2^{b_{\mathrm{dir}}}-1)$;\;
               $j'_y\gets\mathrm{clip}(j_y+\delta_y,0,2^{b_{\mathrm{dir}}}-1)$
        \State $s\gets \vect{t}_i^{\!\top}\,\mathrm{Oct}^{-1}(\mathcal{C}_{\xi}[j'_x],\mathcal{C}_{\xi}[j'_y])$
            \Comment{\eqn{oct-decode}}
        \State \textbf{if } $s>s^{*}$ \textbf{then} $(s^{*},J_x,J_y)\gets(s,j'_x,j'_y)$
    \EndFor
    \State $I^{\mathrm{dir}}_{i,0}\gets J_x$;\; $I^{\mathrm{dir}}_{i,1}\gets J_y$
    \State $I^{\mathrm{nrm}}_{i} \gets \mathrm{searchsorted}(\mathcal{B}_{\rho},\,\mathrm{clip}(s^{*},0,1))$
\EndFor
\State \Return $\bigl(\gamma,\;\mathrm{pack}(\{I^{\mathrm{dir}}\},\,b_{\mathrm{dir}}),
                    \;\mathrm{pack}(\{I^{\mathrm{nrm}}\},\,b_{\mathrm{nrm}})\bigr)$
\end{algorithmic}
\end{algorithm}

\begin{algorithm}[H]
\caption{\ours split-K flash decode, one program per (head, split).}
\label{alg:decode}
\begin{algorithmic}[1]
\Require $\vect{q}_{\mathrm{rot}}\in\real^{d}$,
    $(\gamma_{t},\mathcal{I}_{\mathrm{dir},t},\mathcal{I}_{\mathrm{nrm},t})_{t\in\mathrm{chunk}}$,
    $V$ codec state, codebook centroids $\vect{c}_{\xi}$, $\vect{c}_{\rho}$.
\State $(m,\ell,\mathrm{acc}) \gets (-\infty,\,0,\,\vect{0})$
\For{$t$ \textbf{in} chunk}
    \State Load packed bytes $\mathcal{I}_{\mathrm{dir},t},\mathcal{I}_{\mathrm{nrm},t}$ and $\gamma_t$.
    \State Extract $(I^{\mathrm{dir}}_{t,i,0},I^{\mathrm{dir}}_{t,i,1},I^{\mathrm{nrm}}_{t,i})_{i=0}^{n_{\mathrm{tri}}-1}$ via shift+mask.
    \State Gather $(\hat{\xi}_{t,i},\hat{\eta}_{t,i},\hat{\rho}_{t,i})$ from $(\vect{c}_{\xi},\vect{c}_{\rho})$.
    \State $\hat{\vect{n}}_{t,i}\gets\mathrm{Oct}^{-1}(\hat{\xi}_{t,i},\hat{\eta}_{t,i})$.
    \State $s_t\gets \gamma_t\cdot\sum_i \hat{\rho}_{t,i}\,\vect{q}_{\mathrm{rot},i}^{\!\top}\hat{\vect{n}}_{t,i}$.
    \State $(m,\ell,\mathrm{acc})\gets\mathrm{onlineSoftmax}(m,\ell,\mathrm{acc},s_t/\sqrt{d},\hat{\vect{v}}_t)$.
\EndFor
\State Emit partial $(m,\ell,\mathrm{acc})$ to split buffer.
\end{algorithmic}
\end{algorithm}

\section{Mathematical details}
\label{app:proofs}

\begin{proposition}[Inner-product invariance]
\label{prop:invariance}
For any $\vect{q},\vect{k}\in\real^{d}$ and $\vect{s}\in\{\pm 1\}^{d}$,
$\vect{q}^{\!\top}\vect{k}=(\matt{R}\vect{q})^{\!\top}(\matt{R}\vect{k})
=\gamma\,(\matt{R}\vect{q})^{\!\top}\vect{u}$.
\end{proposition}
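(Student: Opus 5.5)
The plan is to show that $\matt{R}=\matt{H}\,\mathrm{diag}(\vect{s})$ is orthogonal and then substitute the normalisation of \eqn{split} and the definition of $\vect{u}$ in \eqn{rotation}. All steps are linear-algebra identities, so no probabilistic assumptions on $\vect{k}$ or $\vect{s}$ are needed.

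\textbf{Step 1: $\matt{R}$ is orthogonal.}
\begin{itemize}
\item Since $\vect{s}\in\{\pm1\}^d$, the matrix $\mathrm{diag}(\vect{s})$ satisfies $\mathrm{diag}(\vect{s})^{\!\top}\mathrm{diag}(\vect{s})=\mathrm{diag}(s_1^2,\dots,s_d^2)=\matt{I}$.
\item The normalised Walsh-Hadamard matrix is $\matt{H}=d^{-1/2}\matt{H}_d$, where $\matt{H}_d$ is the Sylvester $\pm1$ matrix; $d$ is a power of two, as assumed in \sect{method}. I would record $\matt{H}_d\matt{H}_d^{\!\top}=d\,\matt{I}$ by induction on the Kronecker recursion $\matt{H}_{2m}=\matt{H}_2\otimes\matt{H}_m$. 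The base case $\matt{H}_2\matt{H}_2^{\!\top}=2\matt{I}$ is checked directly, and the mixed-product property of $\otimes$ gives the inductive step. Hence $\matt{H}^{\!\top}\matt{H}=\matt{I}$.
\item Therefore $\matt{R}^{\!\top}\matt{R}=\mathrm{diag}(\vect{s})\,\matt{H}^{\!\top}\matt{H}\,\mathrm{diag}(\vect{s})=\matt{I}$.
\end{itemize}

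\textbf{Step 2: the first equality.} For any $\vect{q},\vect{k}$,
\[
(\matt{R}\vect{q})^{\!\top}(\matt{R}\vect{k})=\vect{q}^{\!\top}\matt{R}^{\!\top}\matt{R}\,\vect{k}=\vect{q}^{\!\top}\vect{k}.
\]

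\textbf{Step 3: the second equality.}
\begin{itemize}
\item For $\vect{k}\neq\vect{0}$, \eqn{split} gives $\vect{k}=\gamma\,\vect{\tilde u}$. Linearity of $\matt{R}$ then gives $\matt{R}\vect{k}=\gamma\,\matt{R}\vect{\tilde u}=\gamma\,\vect{u}$.
\item Substituting into Step~2 yields $\vect{q}^{\!\top}\vect{k}=\gamma\,(\matt{R}\vect{q})^{\!\top}\vect{u}$.
\item For $\vect{k}=\vect{0}$ we have $\gamma=0$, so both sides vanish for whatever placeholder $\vect{u}$ is stored. This matches the $\epsilon$-safe divisor in \algo{encode}, where $\vect{\tilde u}=\vect{k}/\max(\gamma,\epsilon)=\vect{0}$.
\end{itemize}

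The only step with any content is the orthogonality of $\matt{H}$. I would handle it either by the Kronecker induction above or by citing the standard Sylvester construction. As a side remark, $\norm{\vect{u}}_2=\norm{\vect{\tilde u}}_2=1$ follows from the same orthogonality, which justifies $\vect{u}\in S^{d-1}$ in \eqn{rotation}.
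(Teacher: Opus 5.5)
Your proof is correct and follows the paper's own argument. Both establish $\matt{R}^{\!\top}\matt{R}=\mathrm{diag}(\vect{s})\,\matt{H}^{\!\top}\matt{H}\,\mathrm{diag}(\vect{s})=\matt{I}$ and then substitute the magnitude--direction split (the paper writes it as $\vect{k}=\gamma\matt{R}^{\!\top}\vect{u}$); your Kronecker induction for $\matt{H}^{\!\top}\matt{H}=\matt{I}$ and your separate handling of $\vect{k}=\vect{0}$ are details the paper simply takes for granted.
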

\noindent
Any quantization of $\vect{u}$ at decode time can therefore be combined
with a matching rotation of the query, and the dot product is unbiased
in expectation.

\begin{proposition}[Marginal concentration
{\citep[Thm.~1]{zandieh2025turboquant}}]
\label{prop:marginal}
If $\vect{\tilde u}$ is uniformly distributed on $S^{d-1}$ and
$\vect{s}$ is independent uniform on $\{\pm1\}^{d}$, each coordinate
$u_i$ of $\vect{u}=\matt{R}\vect{\tilde u}$ has the symmetric-Beta
density of \eqn{beta-marginal}.
\end{proposition}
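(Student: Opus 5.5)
The plan is to separate the statement into two facts: (i) $\vect{u}=\matt{R}\vect{\tilde u}$ is itself uniform on $S^{d-1}$, and (ii) a single coordinate of a uniform point on $S^{d-1}$ has density proportional to $(1-u^2)^{(d-3)/2}$. Combining them gives \eqn{beta-marginal} exactly for every $d\ge 2$, not only asymptotically.

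For (i), I condition on the sign vector $\vect{s}$. For each fixed $\vect{s}\in\{\pm1\}^d$, the matrix $\matt{R}=\matt{H}\,\mathrm{diag}(\vect{s})$ is a fixed orthogonal matrix, because $\matt{H}$ is the normalised Hadamard matrix and $\mathrm{diag}(\vect{s})$ is a signed permutation. The uniform (normalised surface) measure on $S^{d-1}$ is invariant under every orthogonal map. Hence, conditional on $\vect{s}$, $\vect{u}$ is uniform on $S^{d-1}$. Since $\vect{s}$ is independent of $\vect{\tilde u}$, averaging over $\vect{s}$ leaves this law unchanged. So the randomness of $\vect{s}$ is not needed for this idealised statement; it only matters when $\vect{\tilde u}$ is not already uniform, which is the practical motivation.

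For (ii), by exchangeability of coordinates under permutation matrices it suffices to treat $u_1$. I use the Gaussian representation $\vect{u}\overset{d}{=}\vect{g}/\norm{\vect{g}}_2$ with $\vect{g}\sim\mathcal{N}(0,\matt{I}_d)$. Then
\begin{equation*}
u_1^2=\frac{g_1^2}{g_1^2+\sum_{j\ge2}g_j^2}.
\end{equation*}
This is a ratio $X/(X+Y)$ with $X\sim\chi^2_1$ and $Y\sim\chi^2_{d-1}$ independent, so $u_1^2\sim\mathrm{Beta}(1/2,(d-1)/2)$. By the sign symmetry $g_1\mapsto -g_1$, $u_1$ is symmetric. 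Changing variables $v=u^2$ (Jacobian $2|u|$, then halving for the symmetric split) gives
\begin{equation*}
f(u)=\frac{(1-u^2)^{(d-3)/2}}{B(1/2,(d-1)/2)},\qquad u\in[-1,1].
\end{equation*}
This is \eqn{beta-marginal} with its normalising constant.

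As a cross-check I would also give the geometric slicing argument. The slice $\{u_1=u\}$ is a $(d-2)$-sphere of radius $\sqrt{1-u^2}$, whose area scales as $(1-u^2)^{(d-2)/2}$. The arclength element along a meridian contributes a factor $(1-u^2)^{-1/2}$, and the product reproduces the exponent $(d-3)/2$.

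None of the steps is a real obstacle. The only point needing care is the bookkeeping in the change of variables: the Beta parameters must give exponent $(d-1)/2-1=(d-3)/2$, and the $u^{-1}$ from $v^{-1/2}$ must cancel the Jacobian $2|u|$. The same chi-square ratio argument applied to a block of three coordinates gives $\rho_i^2\sim\mathrm{Beta}(3/2,(d-3)/2)$, so I would present it once and reuse it for Lemma~\ref{lem:triplet-norm}.
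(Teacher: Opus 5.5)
Your proof is correct. Its first half, that orthogonality of $\matt{R}=\matt{H}\,\mathrm{diag}(\vect{s})$ alone makes $\vect{u}$ uniform on $S^{d-1}$ regardless of $\vect{s}$, is exactly the step the paper uses in its proof of Lemma~\ref{lem:triplet-norm}.

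Where you differ is the single-coordinate law. The paper's derivation takes as known that $(U+1)/2\sim\mathrm{Beta}\bigl(\tfrac{d-1}{2},\tfrac{d-1}{2}\bigr)$ and applies an affine change of variables, obtaining the constant $1/\bigl(B(\tfrac{d-1}{2},\tfrac{d-1}{2})\,2^{d-2}\bigr)$. You instead derive $U^2\sim\mathrm{Beta}\bigl(\tfrac12,\tfrac{d-1}{2}\bigr)$ from the Gaussian representation and undo the square using the symmetry $g_1\mapsto -g_1$, obtaining $1/B\bigl(\tfrac12,\tfrac{d-1}{2}\bigr)$. The two constants agree by the duplication identity $B(a,a)=2^{1-2a}B(\tfrac12,a)$ with $a=(d-1)/2$.

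The paper's route is shorter and shows the ``symmetric Beta'' form literally, but it rests on a cited, unproved fact. Yours is self-contained: it is the $k=1$ case of the same $\chi^2_k/(\chi^2_k+\chi^2_{d-k})$ argument the paper uses for triplets ($k=3$), so one argument covers both results. The cost is the extra symmetrization step, needed because squaring discards the sign.
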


\begin{proof}[Proof of \cref{lem:triplet-norm}]
Let \(\vect{g}\sim\mathcal N(\vect{0},\matt{I}_d)\) and set
\(\vect{u}=\vect{g}/\|\vect{g}\|_2\). Then \(\vect{u}\) is uniform on
\(S^{d-1}\). Since the random-signed WHT \(\matt{R}\) is orthogonal, it
preserves this distribution, so the rotated direction \(\matt{R}\vect{u}\)
has the same law as \(\vect{u}\). It therefore suffices to consider any full
three-coordinate block of \(\vect{u}\).

For such a triplet,
\[
\rho_i^2
=
u_{3i+1}^2+u_{3i+2}^2+u_{3i+3}^2
=
\frac{A}{A+B},
\]
where
\[
A=\sum_{j=3i+1}^{3i+3} g_j^2\sim\chi^2_3,
\qquad
B=\sum_{j\notin\{3i+1,3i+2,3i+3\}} g_j^2\sim\chi^2_{d-3}.
\]
The variables \(A\) and \(B\) are independent because they are sums over
disjoint Gaussian coordinates. Since
\(\chi^2_k=\mathrm{Gamma}(k/2,2)\), the standard gamma-ratio identity gives
\[
\rho_i^2
=
\frac{A}{A+B}
\sim
\mathrm{Beta}\!\left(\frac32,\frac{d-3}{2}\right).
\]
Finally, applying the change of variables \(x=\rho_i^2\), \(dx=2\rho_i\,
d\rho_i\), yields
\[
f_{\rho}(r)
=
\frac{2r^2(1-r^2)^{(d-5)/2}}
{B\!\left(\frac32,\frac{d-3}{2}\right)},
\qquad r\in[0,1].
\]
\end{proof}

\section{Derivations}

\subsection{Magnitude-Direction Split (Equation~\ref{eq:split})}
For any nonzero key $\vect{k}\in\real^d$, set
\[
    \gamma=\norm{\vect{k}}_2,\qquad
    \vect{\tilde u}=\vect{k}/\gamma .
\]
Then
\[
    \norm{\vect{\tilde u}}_2
    =\frac{\norm{\vect{k}}_2}{\gamma}=1,
\]
so $\vect{\tilde u}\in S^{d-1}$ and
$\vect{k}=\gamma\vect{\tilde u}$. The implementation stores the original
norm $\gamma$ and divides by a clamped positive denominator only to handle
zero or tiny keys safely.

\subsection{Sign-Flipped WHT Rotation (Equation~\ref{eq:rotation})}
Let $\matt{D}_{\vect{s}}=\mathrm{diag}(\vect{s})$ with
$s_i\in\{\pm1\}$, and let $\matt{H}$ be the normalized Hadamard matrix. The
implemented rotation is
\[
    \matt{R}=\matt{H}\matt{D}_{\vect{s}},
    \qquad \vect{u}=\matt{R}\vect{\tilde u}.
\]
Because $\matt{H}^{\top}\matt{H}=\matt{I}$ and
$\matt{D}_{\vect{s}}^{\top}\matt{D}_{\vect{s}}=\matt{I}$,
\[
    \matt{R}^{\top}\matt{R}
    =\matt{D}_{\vect{s}}^{\top}\matt{H}^{\top}\matt{H}\matt{D}_{\vect{s}}
    =\matt{I}.
\]
Thus $\matt{R}$ maps $S^{d-1}$ to itself. Since the normalized Hadamard is
self-inverse, $\matt{R}^{-1}=\matt{R}^{\top}=\matt{D}_{\vect{s}}\matt{H}$.
The same identity gives the rotated-frame inner product
\[
    \vect{q}^{\!\top}\vect{k}
    =\gamma\,\vect{q}^{\!\top}\matt{R}^{\top}\vect{u}
    =\gamma\,(\matt{R}\vect{q})^{\!\top}\vect{u}.
\]

\subsection{Coordinate Marginal (Equation~\ref{eq:beta-marginal})}
For $\vect{u}\sim\mathrm{Unif}(S^{d-1})$, one coordinate $U$ satisfies
$(U+1)/2\sim\mathrm{Beta}(a,a)$ with $a=(d-1)/2$. With
$z=(u+1)/2$ and $dz/du=1/2$,
\[
\begin{aligned}
f_U(u)
&=\frac{1}{2B(a,a)}
  \left(\frac{1+u}{2}\right)^{a-1}
  \left(\frac{1-u}{2}\right)^{a-1} \\
&=
\frac{(1-u^2)^{(d-3)/2}}
     {B\!\left((d-1)/2,(d-1)/2\right)\,2^{d-2}},
\qquad u\in[-1,1].
\end{aligned}
\]
This is the density implemented by the scalar MSE Lloyd-Max codebook. For
\ours, it mainly justifies the rotated-sphere prior; the default codec uses
the triplet and octahedral marginals below.

\subsection{Triplet Split (Section~\ref{sec:method:triplets})}
Pad $\vect{u}$ with zeros to length $3n_{\mathrm{tri}}$ and split the padded
vector into contiguous triplets $\vect{t}_i$. Since padding adds only zeros,
\[
    \sum_i \norm{\vect{t}_i}_2^2=\norm{\vect{u}}_2^2=1.
\]
Therefore $0\leq\rho_i=\norm{\vect{t}_i}_2\leq1$. If $\rho_i>0$, then
\[
    \vect{n}_i=\vect{t}_i/\rho_i,\qquad
    \norm{\vect{n}_i}_2=1,
\]
so $\vect{n}_i\in S^2$ and $\vect{t}_i=\rho_i\vect{n}_i$. When
$\rho_i=0$, the direction is mathematically arbitrary; the reference and
Triton encoders use an $\epsilon$-safe divisor to produce a finite placeholder.

\subsection{Triplet-Norm Marginal (Equation~\ref{eq:rho-pdf})}
Generate a uniform sphere point as $\vect{u}=\vect{z}/\norm{\vect{z}}_2$ with
$z_j\stackrel{\mathrm{iid}}{\sim}\mathcal{N}(0,1)$. For any three-coordinate
triplet,
\[
    \rho_i^2
    =\frac{z_1^2+z_2^2+z_3^2}{\sum_{j=1}^{d}z_j^2}
    =\frac{X}{X+Y},
\]
where $X\sim\chi^2_3$, $Y\sim\chi^2_{d-3}$, and $X,Y$ are independent. Hence
\[
    \rho_i^2\sim\mathrm{Beta}\!\left(\frac{3}{2},\frac{d-3}{2}\right).
\]
Let $S=\rho_i^2$ and $R=\rho_i$. The change of variables $s=r^2$ gives
$ds/dr=2r$, so
\[
\begin{aligned}
f_R(r)
&=f_S(r^2)\,2r \\
&=\frac{2r(r^2)^{1/2}(1-r^2)^{(d-5)/2}}
        {B(3/2,(d-3)/2)} \\
&=\frac{2r^2(1-r^2)^{(d-5)/2}}
        {B(3/2,(d-3)/2)},\qquad r\in[0,1].
\end{aligned}
\]
This is the density integrated by the implemented triplet-norm Lloyd-Max
codebook.

\subsection{Octahedral Encode (Equation~\ref{eq:oct-encode})}
For $\vect{n}=(x,y,z)\in S^2$, define
$\ell=|x|+|y|+|z|$ and $\vect{p}=\vect{n}/\ell$. Then
$|p_x|+|p_y|+|p_z|=1$, so $\vect{p}$ lies on the unit $L_1$ octahedron. If
$p_z\geq0$, the upper face is already parameterized by $(p_x,p_y)$. If
$p_z<0$, the lower face is folded into the square by
\[
    (\xi,\eta)
    =\left(\mathrm{sign}(p_x)(1-|p_y|),
           \mathrm{sign}(p_y)(1-|p_x|)\right).
\]
On the fold boundary $p_z=0$, the two branches agree because
$|p_x|+|p_y|=1$. The implementation uses the convention
$\mathrm{sign}(0)=+1$ and clamps denominators away from zero.

\subsection{Octahedral Decode (Equation~\ref{eq:oct-decode})}
Given $(\xi,\eta)\in[-1,1]^2$, set $r=1-|\xi|-|\eta|$. If $r\geq0$, the
point is on an unfolded upper face and the unnormalized vector is
$(\xi,\eta,r)$. If $r<0$, inverting the fold gives
\[
    \xi'=\mathrm{sign}(\xi)(1-|\eta|),\qquad
    \eta'=\mathrm{sign}(\eta)(1-|\xi|).
\]
Thus, with $(\xi',\eta')=(\xi,\eta)$ in the $r\geq0$ branch,
\[
    \vect{n}(\xi,\eta)
    =\frac{(\xi',\eta',r)}
          {\norm{(\xi',\eta',r)}_2}.
\]
The final normalization maps the octahedral surface back to $S^2$; both the
reference decoder and the fused attention kernel implement this formula on
dequantized oct-coordinate centroids.

\subsection{Octahedral Marginal (Equation~\ref{eq:oct-marginal})}
For the implemented fold, the induced density on the square is
obtained by composing the affine octahedral inverse with radial projection
onto $S^2$. On each triangular fold region, write the unnormalized inverse as
$\vect{q}(\xi,\eta)$. Uniform surface measure pulls back to
\[
    g(\xi,\eta)=\frac{1}{4\pi}\norm{\vect{q}(\xi,\eta)}_2^{-3},
\]
because each affine face has unit volume factor in
$|\det(\vect{q},\partial_\xi\vect{q},\partial_\eta\vect{q})|$. By symmetry
the marginal depends only on $a=|\xi|$. Integrating the two inner and two
folded vertical segments gives
\[
f_{\xi}(\xi)
=\frac{1}{2\pi}\left[
\int_0^{1-a}\!\!
\frac{ds}{\left(a^2+s^2+(1-a-s)^2\right)^{3/2}}
+
\int_0^a\!\!
\frac{ds}{\left(s^2+(1-a)^2+(s-a)^2\right)^{3/2}}
\right],
\]
and evaluating the integrals yields
\[
    f_{\xi}(\xi)=
    \frac{1}{\pi\sqrt{a^2+(1-a)^2}}
    \left(
      \frac{1-a}{1-2a+3a^2}
      + \frac{a}{2-4a+3a^2}
    \right),
    \qquad a=|\xi|.
\]
The implementation does not evaluate this expression directly; it samples
$\vect{n}\sim\mathrm{Unif}(S^2)$, applies the same octahedral fold, and trains
a one-dimensional Lloyd-Max codebook on the empirical coordinate marginal.

\subsection{Triplet MSE Bound (Equation~\ref{eq:triplet-mse})}
With $\vect{t}_i=\rho_i\vect{n}_i$ and
$\hat{\vect{t}}_i=\hat{\rho}_i\hat{\vect{n}}_i$, add and subtract
$\rho_i\hat{\vect{n}}_i$:
\[
\begin{aligned}
\norm{\vect{t}_i-\hat{\vect{t}}_i}_2^2
&=\norm{\rho_i(\vect{n}_i-\hat{\vect{n}}_i)
      +(\rho_i-\hat{\rho}_i)\hat{\vect{n}}_i}_2^2 \\
&\leq
2\rho_i^2\norm{\vect{n}_i-\hat{\vect{n}}_i}_2^2
+2(\rho_i-\hat{\rho}_i)^2\norm{\hat{\vect{n}}_i}_2^2 \\
&=
2(\rho_i-\hat{\rho}_i)^2
+2\rho_i^2\norm{\vect{n}_i-\hat{\vect{n}}_i}_2^2 .
\end{aligned}
\]
The last line uses $\norm{\hat{\vect{n}}_i}_2=1$, enforced by octahedral
decode normalization.

\subsection{Expected MSE Budget (Equation~\ref{eq:mse-budget})}
Under the Gaussian sphere construction above, the block radius and direction
are independent: the Gaussian block direction is uniform on $S^2$, while the
block and complement radii determine $\rho_i$. Also
\[
    \expect[\rho_i^2]
    =\frac{3/2}{3/2+(d-3)/2}
    =\frac{3}{d}.
\]
Taking expectations in the triplet bound and using high-rate scalar
Lloyd-Max/Panter-Dite scaling,
\[
    \expect[(\rho_i-\hat{\rho}_i)^2]
    \approx C_\rho\sigma_\rho^2 4^{-b_{\mathrm{nrm}}},
    \qquad
    \expect[\norm{\vect{n}_i-\hat{\vect{n}}_i}_2^2]
    \approx C_n\sigma_{\vect{n}}^2 4^{-b_{\mathrm{dir}}},
\]
gives
\[
\begin{aligned}
\expect\!\left[\norm{\vect{t}_i-\hat{\vect{t}}_i}_2^2\right]
&\approx
2C_\rho\sigma_\rho^2 4^{-b_{\mathrm{nrm}}}
+2\expect[\rho_i^2]C_n\sigma_{\vect{n}}^2 4^{-b_{\mathrm{dir}}} \\
&=
2C_\rho\sigma_\rho^2 4^{-b_{\mathrm{nrm}}}
+(6/d)C_n\sigma_{\vect{n}}^2 4^{-b_{\mathrm{dir}}}.
\end{aligned}
\]
Here $\sigma_\rho^2$ denotes the variance of the scalar norm $\rho$, which is
what the implementation quantizes.

\subsection{Lagrangian Bit Allocation (Equation~\ref{eq:bitsplit-opt})}
Let
\[
    A=2C_\rho\sigma_\rho^2,\qquad
    D=\frac{6}{d}C_n\sigma_{\vect{n}}^2.
\]
The high-rate objective is
\[
    f(b_{\mathrm{nrm}},b_{\mathrm{dir}})
    =A4^{-b_{\mathrm{nrm}}}+D4^{-b_{\mathrm{dir}}},
    \qquad b_{\mathrm{nrm}}+2b_{\mathrm{dir}}=B_{\mathrm{tri}}.
\]
For the Lagrangian:
\[
    \mathcal{L}
    =A4^{-b_{\mathrm{nrm}}}+D4^{-b_{\mathrm{dir}}}
     +\lambda(b_{\mathrm{nrm}}+2b_{\mathrm{dir}}-B_{\mathrm{tri}}),
\]
stationarity (deriving by $b_{\mathrm{nrm}}$ and $b_{\mathrm{dir}}$ and equating to 0) gives
\[
    -A\log_4\,4^{-b_{\mathrm{nrm}}}+\lambda=0,
    \qquad
    -D\log_4\,4^{-b_{\mathrm{dir}}}+2\lambda=0.
\]
Thus
\[
    A4^{-b_{\mathrm{nrm}}}
    =\frac{D}{2}4^{-b_{\mathrm{dir}}},
\]
and
\[
    b_{\mathrm{dir}}^\star-b_{\mathrm{nrm}}^\star
    =\log_4\!\left(\frac{D}{2A}\right)
    =\log_4\!\left(
        \frac{3C_n\sigma_{\vect{n}}^2}
             {2dC_\rho\sigma_\rho^2}\right).
\]
The factor of two in the denominator is the shadow price of spending one
additional bit on each of two octahedral coordinates.

\subsection{Bit-Gap Scaling}
The squared triplet norm has
$\mathrm{Var}(\rho_i^2)=\mathcal{O}(d^{-2})$, but the implemented norm
codebook quantizes $\rho_i$ itself. Since
$\rho_i=\chi_3/\sqrt{d}+o(d^{-1/2})$ under the sphere prior,
\[
    \sigma_\rho^2=\mathrm{Var}(\rho_i)=\mathcal{O}(d^{-1}),
    \qquad
    \sigma_{\vect{n}}^2=\mathcal{O}(1).
\]
Substituting these scalings into the corrected stationarity condition gives
\[
    b_{\mathrm{dir}}^\star-b_{\mathrm{nrm}}^\star=\mathcal{O}(1),
\]
up to constants from the source densities and the octahedral metric. The
implemented $(b+1,b-1)$ split is therefore a finite-dimensional codebook
choice supported by the empirical sweeps, not a consequence of a growing
asymptotic gap for direct $\rho$ quantization.

\subsection{Joint Rounding (Equation~\ref{eq:triplet-joint-expand})}
\begin{align}\ell(\hat{\xi}_i,\hat{\eta}_i,\hat{\rho}_i)
&= ||\boldsymbol{t}_i - \hat{\rho}_i \boldsymbol{n}(\hat{\xi}_i, \hat{\eta}_i)||_2^2 \\
&= \rho_i^2 ||\boldsymbol{n}(\xi_i, \eta_i)||_2^2 - 2 \hat{\rho}_i \vect{t}_i^\top \boldsymbol{n}(\hat{\xi}_i, \hat{\eta}_i) + \hat{\rho}_i^2 || \boldsymbol{n}(\hat{\xi}_i, \hat{\eta}_i)||_2^2 \\
&= \rho_i^2 - 2 \hat{\rho}_i s_i(\xi_i, \eta_i)  + \hat{\rho}_i^2
\end{align}
where $||\boldsymbol{n}(\xi_i, \eta_i)||_2^2 = \boldsymbol{n}(\hat{\xi}_i, \hat{\eta}_i)||_2^2 = 1$, and $s_i(\hat{\xi}_i, \hat{\eta}_i) = \vect{t}_i^\top \boldsymbol{n}(\hat{\xi}_i, \hat{\eta}_i)$ i.e. the dot product of the true vector and the quantized direction candidate. This means minimum of this parabola on $\hat{\rho}_i$ occurs exactly when $\hat{\rho}_i = s_i$. Therefore, the best quantized radius is NOT the centroid nearest to the true radius ($\rho_i$), but the centroid nearest to the projected dot product ($s_i$).

\subsection{Score Factorization (Equation~\ref{eq:score-exact})}
The decoded key has
$\hat{\vect{k}}=\gamma\matt{R}^{\top}\hat{\vect{u}}$, so
\[
    \vect{q}^{\!\top}\hat{\vect{k}}
    =\gamma\,(\matt{R}\vect{q})^{\!\top}\hat{\vect{u}}
    =\gamma\,\vect{q}_{\mathrm{rot}}^{\!\top}\hat{\vect{u}}.
\]
The decoder reconstructs each rotated triplet as
$\hat{\vect{u}}_i=\hat{\rho}_i\hat{\vect{n}}_i$, where
\[
    \hat{\vect{n}}_i
    =\mathrm{Oct}^{-1}\!\left(
        \mathcal{C}_{\xi}[I^{\mathrm{dir}}_{i,0}],
        \mathcal{C}_{\xi}[I^{\mathrm{dir}}_{i,1}]
      \right),
    \qquad
    \hat{\rho}_i=\mathcal{C}_{\rho}[I_i^{\mathrm{nrm}}].
\]
Therefore
\[
    \vect{q}^{\!\top}\hat{\vect{k}}
    =\gamma\sum_{i=0}^{n_{\mathrm{tri}}-1}
      \hat{\rho}_i\,\vect{q}_{\mathrm{rot},i}^{\!\top}\hat{\vect{n}}_i.
\]
The fused attention kernel applies the usual attention scale after this raw
score is formed.

\subsection{QJL Residual Estimator}
Let $\vect{r}=\vect{u}-\hat{\vect{u}}$ and
$\gamma_r=\norm{\vect{r}}_2$. For an independent ideal QJL projection
$\matt{R}'$, store
\[
    \vect{\sigma}=\mathrm{sign}(\matt{R}'\vect{r})\in\{\pm1\}^d .
\]
For $a=\vect{q}_{\mathrm{rot}}$, the asymmetric one-bit estimator is
\[
    \hat{z}(a,\vect{r})
    =\sqrt{\frac{\pi}{2d}}\,
      \gamma_r\,(\matt{R}'a)^{\!\top}\mathrm{sign}(\matt{R}'\vect{r}).
\]
Under the ideal QJL model,
\[
    \expect\!\left[(\matt{R}'a)_j
    \mathrm{sign}((\matt{R}'\vect{r})_j)\right]
    =\sqrt{\frac{2}{\pi d}}\,
      \frac{a^{\top}\vect{r}}{\gamma_r}.
\]
Summing over $d$ coordinates and multiplying by
$\sqrt{\pi/(2d)}\,\gamma_r$ gives
\[
    \expect[\hat{z}(a,\vect{r})]=a^{\top}\vect{r}.
\]
Since
$\vect{q}^{\!\top}\vect{k}
=\gamma\,\vect{q}_{\mathrm{rot}}^{\!\top}(\hat{\vect{u}}+\vect{r})$,
the corrected score is
\[
    \vect{q}^{\!\top}\hat{\vect{k}}
    +\gamma\,\hat{z}(\vect{q}_{\mathrm{rot}},\vect{r}).
\]
The implementation uses the same scaling with a structured WHT projection and
stores $\gamma_r$ in fp16, so exact unbiasedness is the ideal-model statement.

\section{Bit-allocation sweep}
\label{app:bit-split-sweep}

We sweep the diagonal $(b{+}\delta,b{-}\delta)$,
$\delta\in\{-2,-1,0,+1,+2\}$, around each uniform reference
$b\in\{2,3,4\}$ on $n{=}8192$ random Gaussian keys at $d{=}128$,
averaged over $4$ rotation seeds. \Table{bit-split-diagonal}
is the diagonal sweep with every entry expressed relative to the
uniform $(b,b)$ baseline. 

\begin{table}[t]
\centering
\titlecaption{Diagonal bit-split sweep $(b{+}\delta,b{-}\delta)$,
relative to the uniform $(b,b)$ reference}{Synthetic Gaussian keys,
$d{=}128$, $n{=}8192$, $4$ seeds. ``$\Delta$ MSE'' and
``$\Delta(1{-}\cos)$'' are the percentage change against the uniform
$(b,b)$ baseline at the same $b$; negative means the off-diagonal
split improves on uniform. ``invalid'' marks diagonal endpoints with a
zero-bit component (no codebook). The implemented split sits at
$\delta{=}{+}1$ for every $b\in\{2,3,4\}$ and is the unique diagonal
step that reduces MSE versus uniform.}
\label{tab:bit-split-diagonal}
\small
\setlength{\tabcolsep}{5pt}
\begin{tabular}{cc|cc|rrrr}
$b$ & $\delta$ & $b_{\mathrm{dir}}$ & $b_{\mathrm{nrm}}$ &
MSE & $1{-}\cos$ &
$\Delta$ MSE vs $(b,b)$ & $\Delta(1{-}\cos)$ vs $(b,b)$ \\
\midrule
2 & $-1$ & 1 & 3 & 0.4555 & 0.2628 & $+223.3\%$  & $+260.3\%$ \\
2 & $\phantom{-}0$ & 2 & 2 & 0.1409 & 0.0729 & $\phantom{+}0.0\%$ (ref) & $\phantom{+}0.0\%$ (ref) \\
\textbf{2} & $\boldsymbol{+1}$ & \textbf{3} & \textbf{1} & \textbf{0.0831} & \textbf{0.0421} & $\boldsymbol{-41.0\%}$ & $\boldsymbol{-42.4\%}$ \\
\midrule
3 & $-2$ & 1 & 5 & 0.4501 & 0.2593 & $+1104.8\%$ & $+1279.8\%$ \\
3 & $-1$ & 2 & 4 & 0.1273 & 0.0658 & $+240.6\%$  & $+250.2\%$ \\
3 & $\phantom{-}0$ & 3 & 3 & 0.0374 & 0.0188 & $\phantom{+}0.0\%$ (ref) & $\phantom{+}0.0\%$ (ref) \\
\textbf{3} & $\boldsymbol{+1}$ & \textbf{4} & \textbf{2} & \textbf{0.0243} & \textbf{0.0120} & $\boldsymbol{-35.0\%}$ & $\boldsymbol{-36.1\%}$ \\
3 & $+2$ & 5 & 1 & 0.0537 & 0.0268 & $+43.8\%$   & $+42.8\%$ \\
\midrule
4 & $-2$ & 2 & 6 & 0.1262 & 0.0653 & $+1217.1\%$ & $+1265.3\%$ \\
4 & $-1$ & 3 & 5 & 0.0332 & 0.0168 & $+246.8\%$  & $+250.5\%$ \\
4 & $\phantom{-}0$ & 4 & 4 & 0.0096 & 0.0048 & $\phantom{+}0.0\%$ (ref) & $\phantom{+}0.0\%$ (ref) \\
\textbf{4} & $\boldsymbol{+1}$ & \textbf{5} & \textbf{3} & \textbf{0.0067} & \textbf{0.0033} & $\boldsymbol{-30.5\%}$ & $\boldsymbol{-31.7\%}$ \\
4 & $+2$ & 6 & 2 & 0.0166 & 0.0081 & $+72.9\%$   & $+69.7\%$ \\
\end{tabular}
\end{table}

\noindent

\section{Rounding ablation}
\label{app:rounding-ablation}

\Table{rounding-ablation} reports the four rounding modes supported by
the reference encoder (\sect{method:rounding}) at matched bits. All
modes share the same bitstream and decoder, so this is a pure
encoder ablation. Metrics are averaged over five seeds on $n{=}4096$
random Gaussian keys at $d{=}128$ with $n_{\mathrm{query}}{=}64$.
\emph{tail95} is the $95$th-percentile per-key squared reconstruction
error. The full-codebook direction search is the joint-optimum upper
bound reachable with this $(\mathcal{C}_{\xi},\mathcal{C}_{\rho})$
pair; \texttt{local\_3x3} is the implemented default of \Algo{encode}.

\begin{table}[t]
\centering
\titlecaption{\ours rounding-mode ablation}{%
Scalar rounding vs.\ the joint-rounding variants of
\sect{method:rounding}. ``$\Delta$ tail95'' is the percentage change
in the $95$th-percentile squared error relative to the scalar
baseline. The 3$\times$3 local search is byte-identical to the
full direction search at every bit width tested.}
\label{tab:rounding-ablation}
\small
\setlength{\tabcolsep}{4pt}
\begin{tabular}{ll|rrrr|rr}
$b$ & rounding & $\cos$ $\uparrow$ & MSE $\downarrow$ & tail95 $\downarrow$ & $|\text{ip err}|$ $\downarrow$ & $\Delta$ MSE & $\Delta$ tail95 \\
\midrule
1 & scalar                & 0.692 & 0.6022 & 0.7935 & 7.065 & --      & -- \\
1 & local\_2$\times$2     & 0.703 & 0.5172 & 0.6664 & 6.554 & $-14.1\%$ & $-16.0\%$ \\
1 & \textbf{local\_3$\times$3}     & \textbf{0.703} & \textbf{0.5172} & \textbf{0.6664} & \textbf{6.554} & $\boldsymbol{-14.1\%}$ & $\boldsymbol{-16.0\%}$ \\
1 & full                  & 0.703 & 0.5172 & 0.6664 & 6.554 & $-14.1\%$ & $-16.0\%$ \\
\midrule
2 & scalar                & 0.955 & 0.0897 & 0.1205 & 2.722 & --      & -- \\
2 & local\_2$\times$2     & 0.957 & 0.0858 & 0.1152 & 2.662 & $-4.4\%$  & $-4.3\%$ \\
2 & \textbf{local\_3$\times$3}     & \textbf{0.958} & \textbf{0.0832} & \textbf{0.1119} & \textbf{2.620} & $\boldsymbol{-7.2\%}$  & $\boldsymbol{-7.1\%}$ \\
2 & full                  & 0.958 & 0.0832 & 0.1119 & 2.620 & $-7.2\%$  & $-7.1\%$ \\
\midrule
3 & scalar                & 0.987 & 0.0261 & 0.0365 & 1.464 & --      & -- \\
3 & local\_2$\times$2     & 0.988 & 0.0250 & 0.0351 & 1.433 & $-4.0\%$  & $-3.7\%$ \\
3 & \textbf{local\_3$\times$3}     & \textbf{0.988} & \textbf{0.0243} & \textbf{0.0343} & \textbf{1.414} & $\boldsymbol{-6.6\%}$  & $\boldsymbol{-5.9\%}$ \\
3 & full                  & 0.988 & 0.0243 & 0.0343 & 1.414 & $-6.6\%$  & $-5.9\%$ \\
\midrule
4 & scalar                & 0.997 & 0.0071 & 0.0102 & 0.763 & --      & -- \\
4 & local\_2$\times$2     & 0.997 & 0.0068 & 0.0098 & 0.749 & $-3.8\%$  & $-3.6\%$ \\
4 & \textbf{local\_3$\times$3}     & \textbf{0.997} & \textbf{0.0067} & \textbf{0.0096} & \textbf{0.739} & $\boldsymbol{-6.1\%}$  & $\boldsymbol{-5.7\%}$ \\
4 & full                  & 0.997 & 0.0067 & 0.0096 & 0.739 & $-6.1\%$  & $-5.7\%$ \\
\end{tabular}
\end{table}

\noindent
Takeaways consistent with the tangent-frame survey of
\citet{kapoulkine2026tangent}: optimal rounding mostly shifts the
encoder, not the codebook; the gain is largest at the tightest bit
budgets (where one misrounding consumes a large fraction of the
remaining precision); and a small local neighborhood suffices in
practice. The implemented 3$\times$3 search gives a uniform
$6$--$14\%$ MSE reduction at matched bit rate with \emph{zero} change
to the bitstream format or the decoder. Because only the encoder is
affected, previously serialised scalar-rounded states remain valid; the
joint-rounding improvement applies to every new \ours state produced
under the default encoder, including the \ours-QJL variant whose
residual stage sees a correspondingly smaller Stage-A error.

\section{QJL effective-rate accounting}
\label{app:qjl-accounting}

\Table{qjl_accounting} spells out the effective-rate cost of adding the one-bit residual side-car described in \sect{method:score}.

\begin{table}[t]
\centering
\titlecaption{QJL effective-rate accounting}{Effective bits per KV scalar (analytically computed from the codec configuration: nominal $K/V$ bits + $g{=}32$/16 group metadata + 1-bit JL residual + scale storage) compared at matched nominal bits. OCTOPUS-QJL adds a 1-bit JL residual on top of OCTOPUS, raising the effective rate by exactly $0.5$\,bits per scalar and giving up roughly that much reconstruction headroom inside the standard \emph{dequantize-then-dot} attention path. We therefore recommend the QJL variant only for score-attention deployments where the residual is consumed by a custom kernel.}
\label{tab:qjl_accounting}
\small
\setlength{\tabcolsep}{4pt}
\begin{tabular}{llcc|cc}
modality & nominal $b$ & bits/scalar & metric & QJL bits/scalar & QJL metric \\
\midrule
LLM (WikiText-2$\downarrow$) & 4 & 4.50 & 10.306 & 5.00 & 10.306 \\
 & 3 & 3.50 & 10.753 & 4.00 & 10.754 \\
 & 2 & 2.50 & 13.517 & 3.00 & 13.511 \\
\midrule
CausVid (LPIPS$\downarrow$) & 4 & 6.92 & 0.038 & 7.40 & 0.038 \\
 & 3 & 5.98 & 0.078 & 6.46 & 0.078 \\
 & 2 & 5.15 & 0.178 & 5.63 & 0.178 \\
\midrule
Causal Forcing (LPIPS$\downarrow$) & 4 & 5.91 & 0.309 & 6.45 & 0.310 \\
 & 3 & 4.87 & 0.390 & 5.40 & 0.389 \\
 & 2 & 3.95 & 0.581 & 4.48 & 0.581 \\
\midrule
AAR (LSD\,dB$\downarrow$) & 4 & 7.24 & 6.20 & 7.59 & 6.17 \\
 & 3 & 6.58 & 6.45 & 6.93 & 6.43 \\
 & 2 & 6.06 & 6.75 & 6.40 & 6.88 \\
\end{tabular}

\end{table}

\section{Kernel speed and KV compression}
\label{app:kernel-speed}

\Table{kernel_speed} reports wall-clock encode and decode times on a
single NVIDIA H200 ($B{=}1$, median of 50 runs after 30 warm-up
iterations) at each modality's full sequence length.
The SDPA~bf16 baseline uses pre-cast bf16 KV tensors for decode (no
per-step cast overhead); its ``encode'' column reports the one-time
fp32$\to$bf16 copy cost.

The fused decode kernels add $5$--$11{\times}$ overhead vs.\ the
highly optimised cuDNN SDPA~bf16 path, decreasing at lower bit widths
as the packed data shrinks.
This overhead is inherent: each decode step fuses
decompression---centroid lookup (TQ-MSE) or octahedral reconstruction
(\ours)---into the attention loop, trading compute for the KV memory
savings reported in the last column.
TQ-MSE encode is lightweight (${\leq}2$\,ms even at 65k tokens);
\ours encode uses a Kronecker-factored WHT and direct triad indexing,
bringing the per-token cost to ${\approx}0.08\,\mu$s ($d_h{=}128$),
which is negligible for auto-regressive LLM decode and small relative
to diffusion denoising for video.

\begin{table}[t]
\centering
\titlecaption{Encode / decode kernel speed and KV compression}{%
Per decode step on a single NVIDIA H200.
``Encode'' is the Triton compression time for $T$ key-value tokens;
``Decode'' is a single fused-attention query against $T$ compressed tokens.
SDPA bf16 encode is the bf16 KV copy baseline.
\ours encode uses the $3{\times}3$ joint rounding search
(\sect{method:rounding}).
KV ratio $=$ bf16 bytes\,/\,compressed bytes.}
\label{tab:kernel_speed}
\small
\setlength{\tabcolsep}{4pt}
\begin{tabular}{clrrrr}
$b$ & codec
  & encode (ms) & decode (ms)
  & dec\,/\,base & KV ratio \\
\midrule
\multicolumn{6}{l}{\textbf{Video} (Wan-1.3B, $H{=}16$, $d_h{=}64$, $g{=}32$, $T{=}32{,}760$)} \\[2pt]
-- & SDPA bf16 & 0.11 & 0.06 & $1.0\times$ & $1.0\times$ \\
4 & \ours & 3.8 & 0.59 & $10.4\times$ & $2.8\times$ \\
4 & TQ-MSE & 2.0 & 0.48 & $8.5\times$ & $3.0\times$ \\
3 & \ours & 3.6 & 0.50 & $8.9\times$ & $3.6\times$ \\
3 & TQ-MSE & 2.1 & 0.45 & $7.9\times$ & $3.8\times$ \\
2 & \ours & 3.6 & 0.49 & $8.6\times$ & $4.5\times$ \\
2 & TQ-MSE & 2.0 & 0.34 & $6.0\times$ & $4.9\times$ \\
\midrule
\multicolumn{6}{l}{\textbf{Audio} (AAR, $H{=}16$, $d_h{=}64$, $g{=}16$, $T{=}455$)} \\[2pt]
-- & SDPA bf16 & 0.02 & 0.03 & $1.0\times$ & $1.0\times$ \\
4 & \ours & 0.10 & 0.15 & $5.9\times$ & $2.4\times$ \\
4 & TQ-MSE & 0.19 & 0.15 & $5.6\times$ & $2.6\times$ \\
3 & \ours & 0.10 & 0.16 & $5.9\times$ & $2.9\times$ \\
3 & TQ-MSE & 0.24 & 0.15 & $5.6\times$ & $3.0\times$ \\
2 & \ours & 0.10 & 0.16 & $6.0\times$ & $3.5\times$ \\
2 & TQ-MSE & 0.19 & 0.15 & $5.8\times$ & $3.8\times$ \\
\midrule
\multicolumn{6}{l}{\textbf{LLM} (Qwen2.5-7B GQA, $H_{\mathrm{kv}}{=}4$, $d_h{=}128$, $g{=}32$, $T{=}65{,}536$)} \\[2pt]
-- & SDPA bf16 & 0.11 & 0.06 & $1.0\times$ & $1.0\times$ \\
4 & \ours & 5.2 & 0.66 & $11.3\times$ & $3.0\times$ \\
4 & TQ-MSE & 1.5 & 0.37 & $6.4\times$ & $3.1\times$ \\
3 & \ours & 4.6 & 0.55 & $9.4\times$ & $3.7\times$ \\
3 & TQ-MSE & 1.6 & 0.33 & $5.7\times$ & $3.9\times$ \\
2 & \ours & 4.5 & 0.52 & $8.9\times$ & $4.8\times$ \\
2 & TQ-MSE & 1.5 & 0.28 & $4.9\times$ & $5.1\times$ \\
\end{tabular}

\end{table}

\section{Long-context needle-in-a-haystack sweep}
\label{app:llm-niah}

\Table{llm_niah} expands the long-context retrieval summary from \sect{results:llm} across the full context-length and bit-width grid.

\begin{table}[t]
\centering
\titlecaption{Multi-key needle-in-a-haystack recall on \texttt{Qwen2.5-7B-Instruct-1M}}{RULER-style multi-key protocol: each cell plants four distractor needles plus one target needle, each carrying a fresh random 8-character alphanumeric magic value; scoring is exact-match on the target value. Recall is averaged over $5$ depth offsets $\{0.0,0.25,0.5,0.75,1.0\}$ per context length. Higher is better. Per column, the best rank is \textbf{bold} (every tied codec); the runner-up is \underline{underlined} only when the column has at least three distinct values, and a saturated column (every codec at the same recall) is left unhighlighted.}
\label{tab:llm_niah}
\small
\setlength{\tabcolsep}{5pt}
\begin{tabular}{llcccccc}
bits & codec & 4k & 8k & 16k & 32k & 64k & 128k \\
\midrule
-- & fp16 baseline & 1.00 & 1.00 & 1.00 & 1.00 & 1.00 & 1.00 \\
\midrule
4 & TurboQuant-MSE & 1.00 & \textbf{1.00} & 1.00 & 1.00 & 1.00 & 1.00 \\
4 & TurboQuant-QJL & 1.00 & \textbf{1.00} & 1.00 & 1.00 & 1.00 & 1.00 \\
4 & PolarQuant & 1.00 & \textbf{1.00} & 1.00 & 1.00 & 1.00 & 1.00 \\
4 & \textbf{\ours} & 1.00 & \textbf{1.00} & 1.00 & 1.00 & 1.00 & 1.00 \\
4 & \textbf{\ours-QJL} & 1.00 & 0.95 & 1.00 & 1.00 & 1.00 & 1.00 \\
\midrule
3 & TurboQuant-MSE & \textbf{1.00} & \textbf{1.00} & \textbf{1.00} & \textbf{1.00} & \textbf{1.00} & \textbf{1.00} \\
3 & TurboQuant-QJL & 0.80 & 0.65 & 0.75 & 0.60 & 0.55 & 0.65 \\
3 & PolarQuant & \underline{0.90} & \underline{0.75} & 0.75 & \underline{0.90} & \textbf{1.00} & \underline{0.85} \\
3 & \textbf{\ours} & \textbf{1.00} & \textbf{1.00} & \textbf{1.00} & \textbf{1.00} & \textbf{1.00} & \textbf{1.00} \\
3 & \textbf{\ours-QJL} & \textbf{1.00} & \textbf{1.00} & \textbf{1.00} & \textbf{1.00} & \textbf{1.00} & \textbf{1.00} \\
\midrule
2 & TurboQuant-MSE & \textbf{0.85} & 0.70 & \underline{0.60} & 0.50 & 0.65 & 0.55 \\
2 & TurboQuant-QJL & 0.05 & 0.00 & 0.00 & 0.00 & 0.00 & 0.00 \\
2 & PolarQuant & 0.05 & 0.05 & 0.05 & 0.05 & 0.05 & 0.00 \\
2 & \textbf{\ours} & \textbf{0.85} & \textbf{0.90} & \textbf{0.80} & \underline{0.75} & \underline{0.85} & \underline{0.70} \\
2 & \textbf{\ours-QJL} & \underline{0.80} & \underline{0.85} & \textbf{0.80} & \textbf{0.85} & \textbf{0.90} & \textbf{0.75} \\
\end{tabular}

\end{table}

\section{Memory-budget Pareto}
\label{app:mem-pareto}

\Figure{mem-pareto} recasts the LLM results as a deployment memory trade-off, making the Pareto frontier visible at fixed context length.

\begin{figure}[t]
    \centering
    \IfFileExists{images/llm_mem_pareto.pdf}{%
      \includegraphics[width=0.7\linewidth]{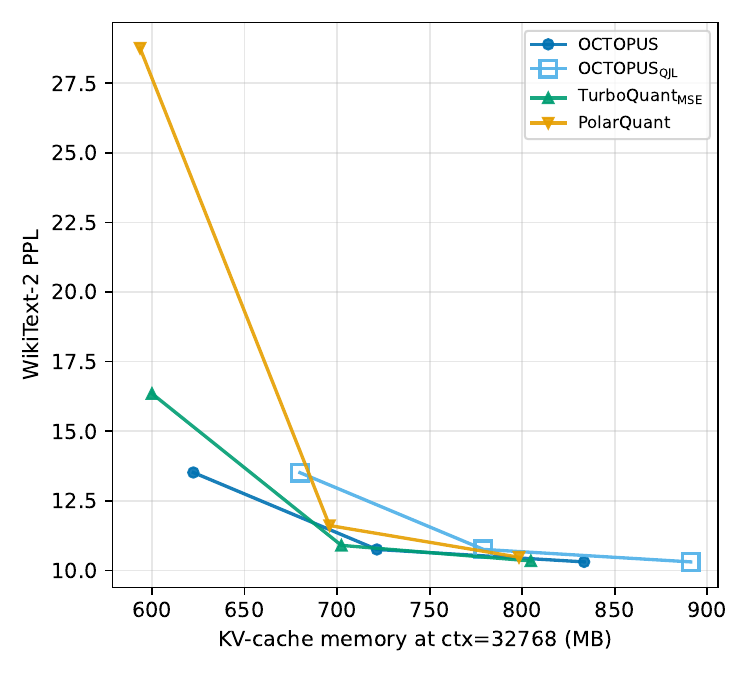}}{}
    \titlecaption{LLM quality at fixed deployment memory}{WikiText-2
      perplexity vs.\ KV-cache memory at $32{,}768$-token context for
      \texttt{Qwen2.5-7B-Instruct-1M}. The probe-time
      \texttt{kv\_cache\_bytes} is linearly extrapolated from the
      sweep's measurement window to $32$k tokens, so the x-axis is the
      memory budget a deployment actually pays. \ours dominates the
      Pareto frontier across the full memory range; \ours-QJL trails
      slightly because the $1$-bit JL residual costs a constant
      $\sim$$60$\,MB but does not move the reconstruction-quality
      curve at this context length.}%
    \label{fig:mem-pareto}
\end{figure}

\section{Full per-modality tables}
\label{app:full-tables}

\Table{video_causvid_app}, \Table{video_causal_forcing_app}, and \Table{audio_app} provide the per-codec metrics behind the modality summary in \sect{results:av}.

\begin{table}[t]
\centering
\titlecaption{CausVid video generation, expanded}{Default recipe (residual window $=1$ frame, $V$ group $g{=}32$, no per-layer boundary protection). All five codecs at the same recipe so only the codec varies. Higher is better for PSNR/SSIM/CLIP/latent-cos; lower for LPIPS. Best per bit width is \textbf{bold}, runner-up \underline{underlined}.}
\label{tab:video_causvid_app}
\small
\setlength{\tabcolsep}{3.5pt}
\begin{tabular}{llrrrrrr}
bits & codec & compr. & LPIPS $\downarrow$ & PSNR $\uparrow$ & SSIM $\uparrow$ & CLIP $\uparrow$ & lat-cos $\uparrow$ \\
\midrule
4 & TurboQuant-MSE & 2.40$\times$ & 0.0454 & 26.47 & 0.8807 & 0.3168 & 0.9894 \\
4 & TurboQuant-QJL & 2.38$\times$ & 0.0963 & 22.62 & 0.8051 & \textbf{0.3175} & 0.9680 \\
4 & PolarQuant & 2.42$\times$ & \textbf{0.0369} & \textbf{27.82} & \textbf{0.8984} & 0.3171 & \textbf{0.9922} \\
4 & \textbf{\ours} & 2.31$\times$ & \underline{0.0380} & \underline{27.52} & 0.8945 & \textbf{0.3175} & \underline{0.9920} \\
4 & \textbf{\ours-QJL} & 2.16$\times$ & \underline{0.0380} & \underline{27.52} & \underline{0.8946} & \underline{0.3174} & \textbf{0.9922} \\
\midrule
3 & TurboQuant-MSE & 2.75$\times$ & 0.0975 & 22.56 & 0.8029 & 0.3172 & 0.9674 \\
3 & TurboQuant-QJL & 2.72$\times$ & 0.2621 & 17.81 & 0.6402 & \textbf{0.3204} & 0.8707 \\
3 & PolarQuant & 2.77$\times$ & 0.0928 & 22.84 & 0.8111 & 0.3168 & \underline{0.9682} \\
3 & \textbf{\ours} & 2.67$\times$ & \textbf{0.0777} & \textbf{23.72} & \textbf{0.8298} & 0.3172 & \textbf{0.9754} \\
3 & \textbf{\ours-QJL} & 2.48$\times$ & \underline{0.0778} & \underline{23.71} & \underline{0.8297} & \underline{0.3173} & \textbf{0.9754} \\
\midrule
2 & TurboQuant-MSE & 3.22$\times$ & 0.2611 & 17.87 & 0.6429 & \textbf{0.3211} & 0.8717 \\
2 & TurboQuant-QJL & 3.19$\times$ & 0.5790 & 13.10 & 0.4503 & 0.3067 & 0.5110 \\
2 & PolarQuant & 3.26$\times$ & 0.2514 & \underline{17.93} & 0.6618 & 0.3171 & 0.8728 \\
2 & \textbf{\ours} & 3.11$\times$ & \underline{0.1784} & \textbf{19.68} & \underline{0.7190} & \underline{0.3180} & \underline{0.9225} \\
2 & \textbf{\ours-QJL} & 2.84$\times$ & \textbf{0.1783} & \textbf{19.68} & \textbf{0.7192} & 0.3179 & \textbf{0.9226} \\
\end{tabular}

\end{table}

\begin{table}[t]
\centering
\titlecaption{Causal Forcing video generation, expanded}{Same recipe and conventions as \Table{video_causvid_app}.}
\label{tab:video_causal_forcing_app}
\small
\setlength{\tabcolsep}{3.5pt}
\begin{tabular}{llrrrrrr}
bits & codec & compr. & LPIPS $\downarrow$ & PSNR $\uparrow$ & SSIM $\uparrow$ & CLIP $\uparrow$ & lat-cos $\uparrow$ \\
\midrule
4 & TurboQuant-MSE & 2.84$\times$ & 0.3339 & 14.58 & 0.5547 & \underline{0.3163} & 0.8051 \\
4 & TurboQuant-QJL & 2.81$\times$ & 0.4213 & 13.09 & 0.4884 & \textbf{0.3169} & 0.7374 \\
4 & PolarQuant & 2.87$\times$ & \textbf{0.3009} & \textbf{15.41} & \textbf{0.5850} & 0.3156 & \textbf{0.8275} \\
4 & \textbf{\ours} & 2.71$\times$ & \underline{0.3093} & 15.21 & 0.5757 & 0.3152 & \underline{0.8192} \\
4 & \textbf{\ours-QJL} & 2.48$\times$ & 0.3103 & \underline{15.23} & \underline{0.5775} & 0.3162 & 0.8190 \\
\midrule
3 & TurboQuant-MSE & 3.41$\times$ & 0.4225 & 13.10 & 0.4861 & \textbf{0.3174} & 0.7347 \\
3 & TurboQuant-QJL & 3.37$\times$ & 0.7786 & 8.42 & 0.1664 & 0.1513 & 0.2497 \\
3 & PolarQuant & 3.46$\times$ & 0.4018 & 13.06 & 0.4971 & 0.3144 & 0.7471 \\
3 & \textbf{\ours} & 3.29$\times$ & \underline{0.3904} & \underline{13.48} & \underline{0.5090} & 0.3157 & \underline{0.7594} \\
3 & \textbf{\ours-QJL} & 2.96$\times$ & \textbf{0.3892} & \textbf{13.54} & \textbf{0.5123} & \underline{0.3158} & \textbf{0.7627} \\
\midrule
2 & TurboQuant-MSE & 4.28$\times$ & 0.7770 & 8.45 & 0.1664 & 0.1499 & 0.2508 \\
2 & TurboQuant-QJL & 4.21$\times$ & 0.8164 & 7.10 & 0.1106 & 0.0845 & 0.1481 \\
2 & PolarQuant & 4.35$\times$ & \underline{0.7273} & 8.62 & 0.2155 & 0.2089 & 0.2984 \\
2 & \textbf{\ours} & 4.05$\times$ & \textbf{0.5808} & \textbf{10.90} & \textbf{0.3593} & \underline{0.3024} & \textbf{0.5618} \\
2 & \textbf{\ours-QJL} & 3.57$\times$ & \textbf{0.5808} & \underline{10.89} & \underline{0.3577} & \textbf{0.3034} & \underline{0.5592} \\
\end{tabular}

\end{table}

\begin{table}[t]
\centering
\titlecaption{Autoregressive audio (AAR), expanded}{Metrics averaged over $100$ random $10$\,s AudioSet-20k clips used as CLAP-audio conditioning at the default recipe (residual window $1$ scale, $V$ group $g{=}16$, no per-layer protection). Best per bit width is \textbf{bold}, runner-up \underline{underlined}.}
\label{tab:audio_app}
\small
\setlength{\tabcolsep}{3.5pt}
\begin{tabular}{llrrrrr}
bits & codec & compr. & LSD $\downarrow$ & mel-MSE $\downarrow$ & SNR $\uparrow$ & lat-cos $\uparrow$ \\
\midrule
4 & TurboQuant-MSE & 2.29$\times$ & 6.36 & 0.2191 & 2.07 & -- \\
4 & TurboQuant-QJL & 2.26$\times$ & 6.24 & 0.2284 & 1.80 & -- \\
4 & PolarQuant & 2.31$\times$ & 6.28 & \underline{0.2130} & \textbf{2.23} & -- \\
4 & \textbf{\ours} & 2.21$\times$ & \underline{6.20} & 0.2134 & \underline{2.22} & -- \\
4 & \textbf{\ours-QJL} & 2.11$\times$ & \textbf{6.17} & \textbf{0.2104} & 2.19 & -- \\
\midrule
3 & TurboQuant-MSE & 2.48$\times$ & 6.48 & 0.2377 & 1.55 & -- \\
3 & TurboQuant-QJL & 2.46$\times$ & 12.72 & 1.4883 & -5.44 & -- \\
3 & PolarQuant & 2.51$\times$ & \textbf{6.34} & \textbf{0.2279} & \textbf{1.71} & -- \\
3 & \textbf{\ours} & 2.43$\times$ & 6.45 & \underline{0.2343} & 1.51 & -- \\
3 & \textbf{\ours-QJL} & 2.31$\times$ & \underline{6.43} & 0.2346 & \underline{1.60} & -- \\
\midrule
2 & TurboQuant-MSE & 2.72$\times$ & 12.65 & 1.4547 & -5.30 & -- \\
2 & TurboQuant-QJL & 2.69$\times$ & 13.17 & 1.6676 & -5.98 & -- \\
2 & PolarQuant & 2.75$\times$ & 12.59 & 1.4325 & -5.28 & -- \\
2 & \textbf{\ours} & 2.64$\times$ & \textbf{6.75} & \textbf{0.3171} & \textbf{1.07} & -- \\
2 & \textbf{\ours-QJL} & 2.50$\times$ & \underline{6.88} & \underline{0.3240} & \underline{0.97} & -- \\
\end{tabular}

\end{table}

\section{Stills}
\label{app:stills}

\Figure{causvid_stills} shows representative worst-case frames for the video codecs, complementing the aggregate LPIPS/PSNR/SSIM numbers in \sect{results:av}.

\begin{figure}[htb]
    \centering
    \begin{subfigure}[b]{\linewidth}
        \includegraphics[width=0.9\linewidth]{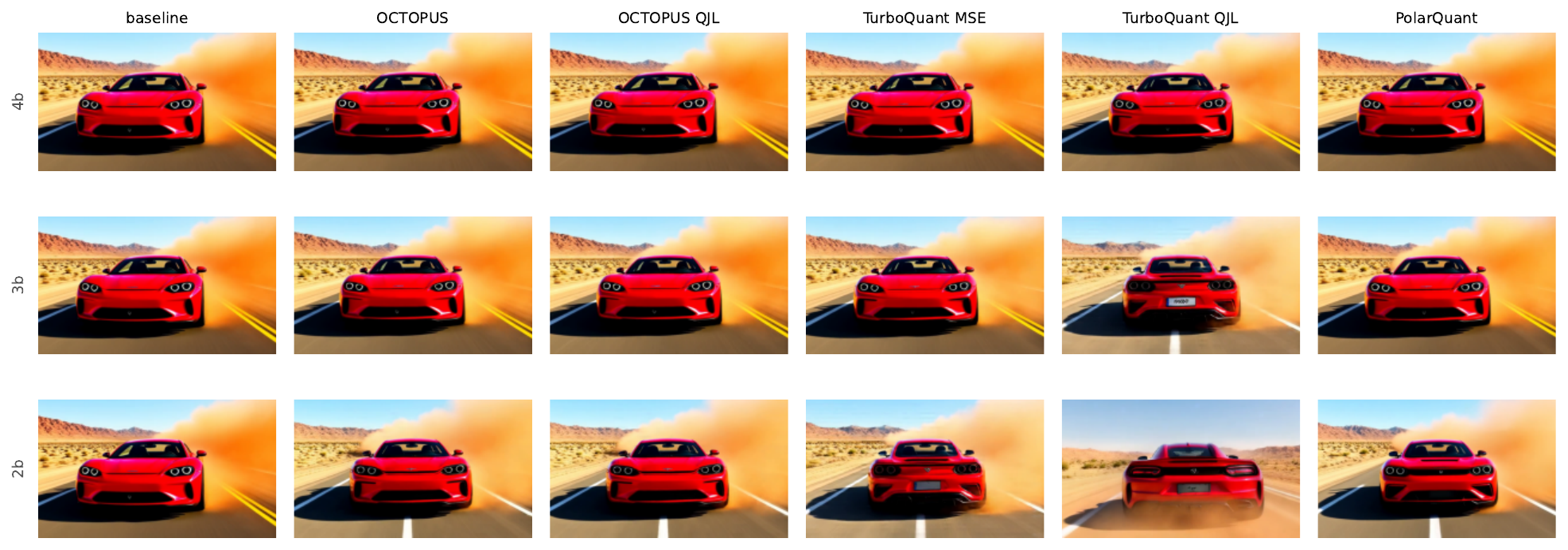}
        \caption{CausVid}
    \end{subfigure}\\[0.5ex]
    \begin{subfigure}[b]{\linewidth}
        \includegraphics[width=0.9\linewidth]{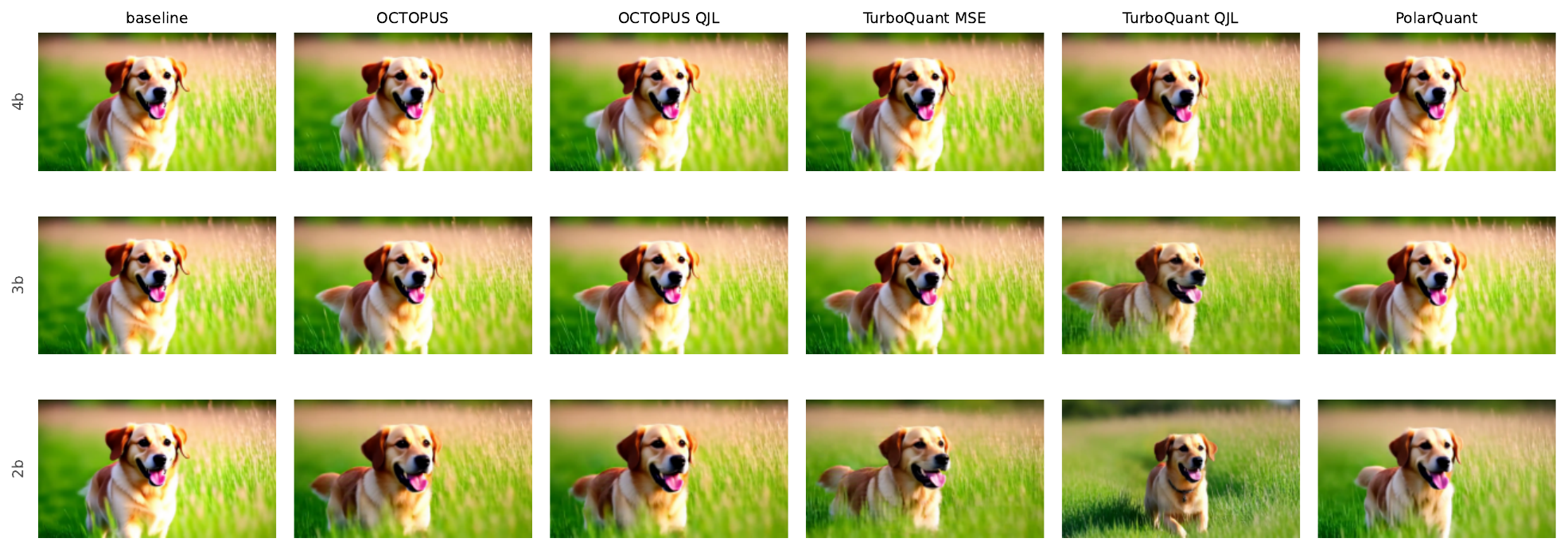}
        \caption{CausVid}
    \end{subfigure}\\[0.5ex]
    \begin{subfigure}[b]{\linewidth}
        \includegraphics[width=0.9\linewidth]{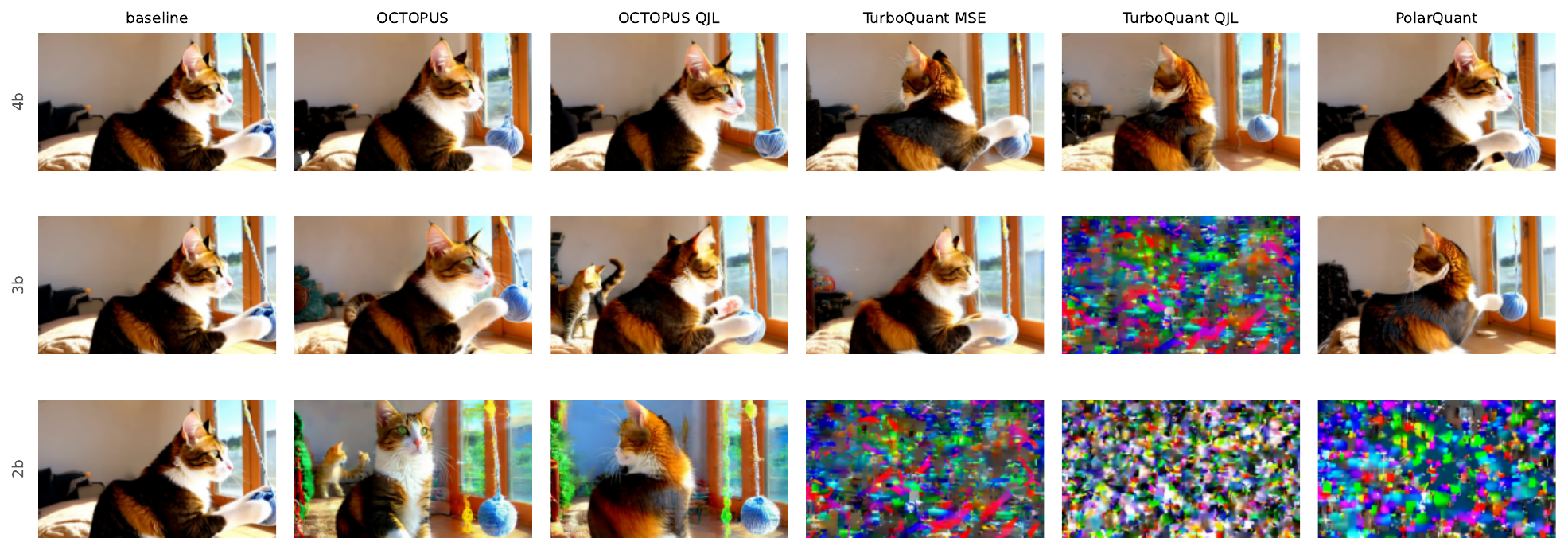}
        \caption{Causal Forcing}
    \end{subfigure}\\[0.5ex]
    \begin{subfigure}[b]{\linewidth}
        \includegraphics[width=0.9\linewidth]{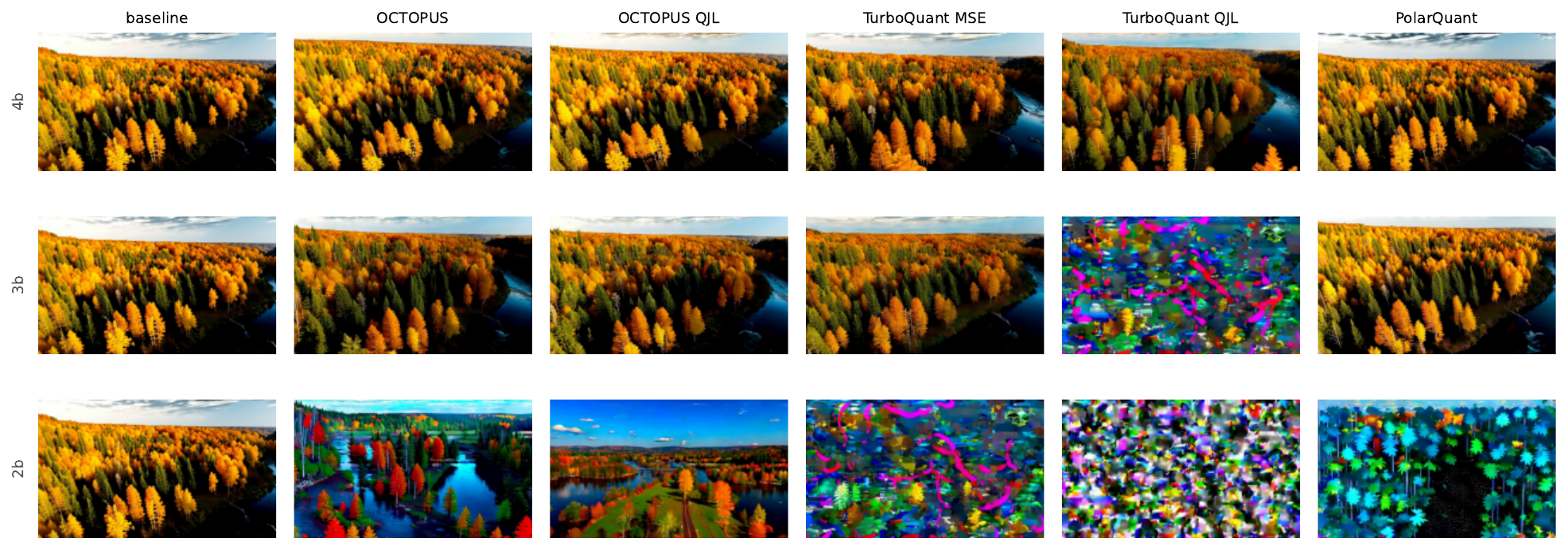}
        \caption{Causal Forcing}
    \end{subfigure}
    \titlecaption{Worst-case codec divergence across bit depths}{%
      Each panel shows the single frame with the highest combined
      cross-codec L1 divergence from the fp16 baseline (same frame index
      for both pipelines). Rows: $b{=}4,3,2$; columns: baseline and each
      codec. \ours remains visually faithful at every bit width; competing
      codecs collapse at $b{\le}3$.}%
    \label{fig:causvid_stills}
\end{figure}

\ifshowchecklist
  \clearpage
  \input{checklist.tex}
\fi

\end{document}